\documentclass[letterpaper]{article} 
\usepackage{aaai2027}  
\usepackage[hyphens]{url}  
\usepackage{graphicx} 
\usepackage{natbib}  
\usepackage{caption} 
\DeclareCaptionStyle{ruled}{labelfont=normalfont,labelsep=colon,strut=off} 
\usepackage{booktabs}
\usepackage{amsmath,amssymb}
\usepackage{algorithm}
\usepackage{algorithmic}
\usepackage{tikz}
\usetikzlibrary{decorations.pathreplacing,calc}
\newtheorem{proposition}{Proposition}

\title{Teacher-Anchored Selection of Post-Training Quantized Models
under Domain Shift}

\author{
Alejandro Rodriguez Dominguez,\textsuperscript{\rm 1,2,3}
Muhammad Shahzad,\textsuperscript{\rm 1}
Xia Hong\textsuperscript{\rm 1}
}

\affiliations{
\textsuperscript{\rm 1}Department of Computer Science,
University of Reading, Reading, United Kingdom\\
\textsuperscript{\rm 2}Quantitative Analysis and Artificial Intelligence Department,
Miralta Finance Bank S.A., Madrid, Spain\\
\textsuperscript{\rm 3}Lecturer, Albert School, Madrid, Spain
}

\begin{document}
\maketitle

\begin{abstract}
Compressing a trained model yields a family of deployment candidates, and under domain
shift the most compressed one need not be the one to deploy. We study selection over such a family, with candidates and teacher fixed and target
labels absent or scarce. Two findings organize the label-free case. Minimum teacher distortion behaves almost as a constant rule, selecting the same
eight-bit, per-channel, unclipped configuration in every run, which does not minimize
empirical target cross-entropy. Established estimators divide sharply: in the overconfident-collapse regime of the CNN
families, confidence-based estimators order the family close to backwards, and the diagnostics that identify it need the labels the setting denies, while output-distribution
estimators match the teacher-relative anchor and on one architecture beat it. Distortion is nonetheless stable, so a supervised term can move selection away from it.
Combining the two, we give exact quadratic identities for a canonical quadratic analogue
of the family. We also show that under symmetric corruption the label-dependent part of a
criterion linear in the label indicator is multiplied by one common factor whenever its
coefficient sums are candidate-invariant, a class holding teacher contrasts and accuracy
but not cross-entropy. These characterize the score's components without bounding selection regret. Across one hundred and thirty-four candidate families, one per independently trained
convolutional or Vision Transformer teacher, anchoring reduces mean regret at the
smallest label budget in every setting, an advantage that fades beyond twenty-five
labels.
\end{abstract}

\section{Introduction}
\label{sec:introduction}
Model compression usually produces several plausible deployment candidates rather than
one preferable model \citep{cheng2018survey}. Post-training quantization is a clear example
\citep{jacob2018quantization}, where bit widths, clipping thresholds, granularity and
layer configurations generate a family derived from one full-precision teacher
\citep{gou2021knowledge,hinton2015distilling}, and recent methods keep enlarging it by
tuning parameters per layer \citep{ranjan2026mixqvit,arai2025qep}. Under domain shift, candidates that behave alike on source or calibration data can have
very different target risks \citep{ovadia2019can,minderer2021revisiting}, so the
construction criterion need not identify the best model to deploy.

This is a deployment-time problem distinct from candidate construction: given an
already built family sharing a teacher, which candidate should be used on the target
domain? Candidates are neither retrained nor adapted during selection, and the
held-out target test set is used only for evaluation. The objective is the candidate
with lowest target cross-entropy. Operational constraints such as memory, latency or
energy are assumed to have restricted the family beforehand, so we study selection within an admissible family, not the broader problem of trading
risk against cost.

Two signals are available and neither suffices alone. Distance to the shared teacher
needs no labels and is stable across resamples \citep{liu2023pdquant}, but on standard
grids the candidate closest to its teacher is almost always the least-compressed one,
so its ranking is nearly fixed in advance. A small labeled target sample measures the objective
directly but at the budgets a practitioner will spend is dominated by variance
\citep{okanovic2025all}, and corrupted labels degrade its ordering further
\citep{chen2021robustness}. The failures are complementary, one biased but stable and the other unbiased but noisy,
which is where shrinkage should help.

Anchored selection makes that shrinkage explicit: the label-free distortion is the
base score, the supervised loss enters as a weighted correction, and the weight is
chosen by cross-validation inside the labeled sample, so the labels move the choice
away from the teacher only as far as they support. The construction admits exact
analysis. Quadratic identities relate the members of a canonical quadratic analogue of the family
and make explicit what each supervised term adds beyond the teacher distance; the
selector we evaluate replaces the squared distortion with KL and tunes its coefficient,
so the identities characterize components without governing regret.
Under symmetric corruption the label-dependent part of a criterion linear in the label indicator
attenuates by a single common factor whenever its coefficient sums do not vary across
candidates. That covers validation accuracy as well as the Brier score
\citep{gneiting2007strictly}, so the result describes a class rather than favouring any
member of it.

Our contributions are to formulate deployment-time selection over a fixed family sharing
a teacher and to characterize its label-free baselines, showing that the teacher-relative rule is close
to degenerate, that confidence-based estimators can fail catastrophically
under overconfident quantization collapse in a way that depends on a regime not
identifiable without target labels, and that output-distribution estimators remain
competitive with the anchor and on one architecture outperform it; to develop the anchored family, together with quadratic
identities for its canonical quadratic analogue; to prove the
attenuation result; and to show that anchoring improves on direct validation at the
smallest label budgets, an advantage that fades as the supervised estimate becomes
reliable alone.

\section{Related Work}
\label{sec:related}
\paragraph{Performance estimation under shift.}
Without target labels, methods estimate performance from confidence, entropy,
calibrated thresholds, class dispersity or prediction-matrix statistics
\citep{hendrycks2017baseline,garg2022atc,guillory2021doc,deng2023nuclear,
xie2024mano}. These target absolute performance, a weaker requirement than ours:
selection needs only the ordering, but needs it right between candidates that
differ slightly, so small estimation errors reverse it. A second line reads
agreement between independently trained predictors as a proxy for accuracy
\citep{jiang2022assessing,baek2022agreementontheline}, and concurrent work
supplies a foundation model as external reference \citep{li2026frap}. Both require a reference the practitioner must provide; ours already contains one,
since every candidate derives from the same teacher. Semi-Supervised Model Evaluation is closest in the
supervision it assumes \citep{shanmugam2025ssme}, but models the joint scores of
several classifiers rather than their structured relation to a common parent.

\paragraph{Shared teachers and compressed-model construction.}
The teacher those methods lack is standard in compression. Distillation transfers
a teacher's predictive distribution to a smaller model
\citep{hinton2015distilling,tian2020contrastive}, and post-training quantization
methods use prediction differences, layer sensitivities or propagated quantization
errors to choose quantization parameters
\citep{liu2023pdquant,ranjan2026mixqvit,arai2025qep}. There the teacher guides construction and the resulting candidate is the answer; our problem begins with the family fixed. Teacher
distance is the signal to carry across that boundary, and carrying it alone is not
enough.

\paragraph{Selection with scarce or unreliable labels.}
Spending labels raises the question of what to compute from very few. Direct validation is the natural baseline, and label-efficient selection has been
approached by choosing which examples to annotate \citep{okanovic2025all}; ours is
complementary, since every selector receives the same examples and budget. Which loss is computed matters, because cross-entropy, Brier score and
accuracy induce different rankings once calibration and confidence shift
\citep{guo2017calibration,ovadia2019can,minderer2021revisiting}. The Brier and logarithmic scores are standard proper scoring rules
\citep{gneiting2007strictly}; we introduce no new one, asking instead which to anchor. Under multiclass class-conditional noise validation accuracy can retain selection
information \citep{chen2021robustness}; our attenuation result explains why it survives
corruption, and identifies the class of linear criteria that share that behavior.
Anchored selection sits at the intersection of these three. Performance estimation
normally imports a reference from outside; here construction has already supplied one,
the teacher, which we combine with a supervised term small enough that the
scarce-label concerns apply directly.

\section{Anchored Selection and Its Identities}
\label{sec:framework}
Let $f$ be a fixed teacher with predictive distribution $p_f(\cdot\mid x)$ and
$\mathcal G=\{g_1,\ldots,g_M\}$ a finite family of candidates with predictions
$p_g(\cdot\mid x)$. The framework assumes only that all candidates share the
same reference teacher. On the shifted target domain we observe an unlabeled
pool $\mathcal U$ and, depending on the regime, a labeled sample
$\mathcal S=\{(x_i,y_i)\}_{i=1}^{n}$. A disjoint test set $\mathcal T$ is used
only for evaluation and defines
$L_{\mathrm{test}}(g)=|\mathcal T|^{-1}\sum_{(x,y)\in\mathcal T}-\log
p_g(y\mid x)$. Selector quality is the cross-entropy selection regret
$R(\hat g)=L_{\mathrm{test}}(\hat g)-\min_{g}L_{\mathrm{test}}(g)$.

\subsection{One Family of Selectors}
\label{subsec:anchored}
Without target labels the natural rule keeps the candidate whose predictions stay
closest to the teacher's, by the average KL distortion over the unlabeled pool,
\[
D(g)=\frac{1}{|\mathcal U|}\sum_{x\in\mathcal U}
\mathrm{KL}\!\left(p_f(\cdot\mid x)\,\|\,p_g(\cdot\mid x)\right)
\]
which gives $\hat g_D=\arg\min_g D(g)$. It measures how
much a candidate changes the teacher's predictions, not whether the change is
favorable.

Every selector we study adds one function of the labeled sample to that base
score:
\begin{equation}
\label{eq:family}
S(g)=D(g)+\alpha\,\Phi_{\mathcal S}(g),\qquad \alpha\ge 0
\end{equation}
so the members differ only in the choice of $\Phi_{\mathcal S}$, and $\alpha=0$ recovers
pure distortion. Three choices matter here.

\begin{itemize}\itemsep2pt
\item \textbf{CE-combo} takes $\Phi_{\mathcal S}=\widehat{\mathrm{CE}}_{\mathcal S}$,
the empirical cross-entropy on the labeled sample. Large $\alpha$ recovers direct
validation, so \eqref{eq:family} interpolates between the two label regimes.
\item \textbf{Alignment} takes $\Phi_{\mathcal S}=-A_{\mathcal S}$, where
$A_{\mathcal S}=|\mathcal S|^{-1}\sum\langle p_g(x)-p_f(x),\,e_y-p_f(x)\rangle$
scores how far a candidate moves \emph{from} the teacher prediction \emph{toward}
the observed class, rather than how far it moves at all. The sign is flipped
because favorable movement should lower the score.
\item \textbf{Permutation control} takes the same $A_{\mathcal S}$ after permuting
the observed labels across the sample. It keeps the statistic's scale and destroys only the label--input correspondence. What
it isolates is therefore what the direction contributes beyond the shrinkage.
\end{itemize}

\noindent
Tables abbreviate these consistently as Dist.\ for the label-free anchor, Val-CE and
Val-Acc for direct validation on cross-entropy and on accuracy, CE-combo, Align,
Perm.\ for the permutation control and Teach.\ for the teacher component.
The coefficient is always chosen by cross-validation inside $\mathcal S$ with $\alpha=0$
in the grid, so no selector receives labels the others do not, and each can revert to
the anchor when the data do not support leaving it.
Figure~\ref{fig:collapse}(a) shows the geometry.

\subsection{What the Family Implies}
\label{subsec:properties}
Two consequences of writing the selectors this way are used later; derivations are
in Supplement~\ref{supp:identities}.

\emph{Alignment separates into a label part and a label-free part.} Because
$A_{\mathcal S}$ is linear in the observed class, $A_{\mathcal S}=
A_{\mathcal S}^{\mathrm{lab}}+A_{\mathcal S}^{\mathrm{teach}}$, where only the
first uses $e_y$. Retaining the second alone gives the \emph{teacher
component}. In the fully quadratic identity the corresponding teacher-only criterion
reduces to predictive-concentration ranking; the implemented ablation, which keeps KL
distortion and a cross-validated coefficient, is related but not identical. The selector is
not label-free: its coefficient is still cross-validated on the labeled sample, so only
the score component is label-independent. We evaluate it as an ablation.

\emph{Label information decays at one rate for a class of criteria.} Write a
label-dependent criterion as $a_g(x)^\top e_y$. Under symmetric $K$-class corruption
that retains the clean label with probability $1-\eta$,
\[
\mathbb E\!\left[a_g^\top e_{\tilde y}\mid y\right]
=\left(1-\tfrac{K}{K-1}\eta\right)a_g^\top e_y
+\tfrac{\eta}{K-1}\,\mathbf 1^\top a_g ,
\]
so a single multiplicative factor governs candidate \emph{comparisons} exactly when
the offset $\mathbf 1^\top a_g$ does not vary across candidates. That condition holds
for teacher contrasts such as $p_g-p_f$, whose coordinates sum to zero, and for
accuracy indicators, whose coordinates sum to one, giving
$\mathbb E[\widehat{\mathrm{Acc}}^{\mathrm{noisy}}]
=(1-\frac{K}{K-1}\eta)\widehat{\mathrm{Acc}}+\frac{\eta}{K-1}$. It does not hold in
general: the coefficient vector of cross-entropy, $-\log p_g$, has a
candidate-dependent coordinate sum. The alignment statistic and validation accuracy
therefore share one attenuation rate, and the result singles out neither of them; it
says nothing about criteria outside that class.

\section{Empirical Evaluation}
Four questions organize this section. Can a label-free rule solve the problem at all?
Does spending few labels help? Does the directional term add anything the anchoring does
not? And does the corruption analysis predict what we measure?

\label{sec:experiments}
We evaluate three target-supervision regimes: no labels, scarce clean labels, and
corrupted labels. Candidate parameters are never updated. Two resources are held fixed: the target inputs, which every selector may use in full,
and the label budget, the same $n$ examples for each supervised selector including those
its coefficient selection consumes. 



\paragraph{Settings and protocol.}
A \emph{run} is one teacher with its fixed $72$-candidate family; an \emph{execution}
repeats the pipeline with a retrained teacher on identical splits, so a shift--seed
configuration recurs once per execution with a different candidate family. There are $74$ base configurations, $15$ and $15$ on the CNNs and $17$ and $27$ on the
ViTs; since the CNN settings were executed three times and the ViT settings once, the
number of independently trained teachers, hence of candidate families, is
$45+45+17+27=134$. The statistical unit throughout is the run, so pooled CNN
comparisons have $45$ units per architecture; Supplement~\ref{supp:artifact} documents the
runs excluded from this cohort and the structural criteria used. Four settings are used throughout;
Supplement Table~\ref{tab:settings} summarizes them and Supplement Table~\ref{tab:supp_runs}
gives the full accounting including within-run repetitions. The DomainNet \citep{peng2019moment} CNN experiments use five fixed shifts at $30$
classes, from Real to Clipart, Painting and Sketch, and from Clipart and Painting to
Sketch. ResNet50 \citep{he2016deep} and MobileNetV2
\citep{sandler2018mobilenetv2} use three seeds per shift, giving $15$ base configurations each.
Their teachers come from transfer learning on ImageNet
\citep{russakovsky2015imagenet}, and each architecture is run through three
independent end-to-end executions on identical splits. The two
Vision Transformer \citep{dosovitskiy2021image} settings are DomainNet at $50$
classes ($17$ runs) and held-out-domain CIFAR-20 \citep{krizhevsky2009learning} ($27$ runs), with teachers from a continual-distillation pipeline; because teacher construction
differs between the two groups, we do not read cross-setting differences as evidence
about teacher quality alone. Selection pools contain $300$ labeled
examples in the CNN settings, $3646$ in ViT--DomainNet and $1332$ in
ViT--CIFAR-20.

Every run uses the same weights-only grid. Bit widths are
$\{2,3,4,5,6,8\}$ and clipping percentiles $\{99.0,99.5,100.0\}$, crossed with
per-tensor or per-channel scaling and an endpoint option for the first and last
quantizable layers, giving $6\times3\times2\times2=72$ candidates. Clipping percentiles come from the weight
tensors, so candidate construction consumes no target data. Coefficients are chosen by
cross-validation within $\mathcal S$, five folds when $n\ge 25$ and leave-one-out
below. The distortion term is evaluated once on the complete unlabeled target pool at every
budget; only the supervised term uses the labeled subsample of size $n$. Supplement~\ref{supp:protocol} states the full protocol,
including fold construction and seeds. Probabilities are lower-clipped at $10^{-8}$ in
the CNN pipelines and $10^{-12}$ in the ViT pipelines, following each implementation; we verify that the choice of floor changes regret by less than
$0.001$ across three floors spanning six orders of magnitude, so the magnitudes reflect the candidate
families and not the clipping.

\paragraph{Selectors.}
The three members of \eqref{eq:family} defined in
Section~\ref{subsec:anchored} are evaluated against two references that spend the
same labels: distortion alone, which minimizes $D^{\mathrm{KL}}$ over the target
examples available at that budget, and direct validation, which minimizes empirical
cross-entropy on $\mathcal S$. The teacher component of the alignment statistic is
evaluated as a further ablation. Full definitions are in
Supplement Table~\ref{tab:supp_selectors}.

\subsection{Selection without Target Labels}
\label{subsec:labelfree}
Minimum KL distortion is close to a constant rule: on the complete pool it chooses the
same configuration in all $134$ candidate families,
$8$-bit, per-channel,
unclipped, with the first and last quantizable layers in full precision. We write $\max_g L_{\mathrm{test}}(g)-\min_g L_{\mathrm{test}}(g)$ for a family's
\emph{cross-entropy spread}, the regret of choosing its worst member, which differs by a
factor of five across our settings. Distortion's mean
regret is $0.220$ on ResNet50 and $0.087$ on MobileNetV2 over the pooled
executions, so the label-free question is whether the least-compressed candidate is the empirical
cross-entropy minimizer, reproducible with no target data at all. The choice is also
near-degenerate: the two $8$-bit per-channel unclipped candidates differing only
in the endpoint option are separated by a distortion gap of order $10^{-4}$, their
ordering reverses on a $150$-example half-pool in one of five shifts in two of the
six CNN executions, and their target cross-entropies differ by $0.0015$.

Table~\ref{tab:sota} compares every selector at the smallest budget, so no row is
advantaged by labels another does not see. Regret is the mean over runs. The two blocks answer whether spending $n$ labels beats the best free option; paired
tests are between supervised selectors
(Supplement Table~\ref{tab:paired}). 

\begin{table}[t]
\centering
\scriptsize
\setlength{\tabcolsep}{1.5pt}
\caption{Selection regret, mean\,$\pm$\,standard deviation over runs. Columns are the
four architecture--dataset settings: ResNet50 (RN50) and MobileNetV2 (MNV2) on
DomainNet with $30$ classes, and Vision Transformers on DomainNet with $50$ classes
(ViT--DN) and on held-out-domain CIFAR-20 with $20$ classes (ViT--C20). \emph{Every
supervised selector here uses $n=10$ labels}, the smallest budget we evaluate, so all
columns are comparable; each gets the same ten examples, including those its
coefficient selection consumes. The label-free rules use none, so their values hold at
any $n$. Boldface marks the lowest mean per column. Row labels give the venue and year;
the corresponding references are \citet{hendrycks2017baseline} for average confidence
and entropy, \citet{garg2022atc} for ATC, \citet{deng2023nuclear} for nuclear norm,
\citet{guillory2021doc} for DOC, \citet{tu2024softmaxcorr} for SoftmaxCorr,
\citet{lu2023characterizing} for COT and \citet{liu2023pdquant} for distortion. Runs
are $45$ per CNN architecture over three executions, $17$ and $27$ on the ViT settings;
the other budgets are in Supplement Table~\ref{tab:sweep}.}
\label{tab:sota}
\begin{tabular}{@{}lcccc@{}}
\toprule
Method & RN50 & MNV2 & ViT--DN & ViT--C20 \\
\midrule
Direct validation & 0.182\,$\pm$0.06 & 0.227\,$\pm$0.13 & 0.208\,$\pm$0.12 & 0.222\,$\pm$0.10 \\
CE-combo (ours) & \textbf{0.166\,$\pm$0.09} & 0.103\,$\pm$0.07 & \textbf{0.139\,$\pm$0.14} & 0.115\,$\pm$0.06 \\
Alignment (ours) & 0.168\,$\pm$0.10 & 0.088\,$\pm$0.07 & 0.205\,$\pm$0.33 & 0.150\,$\pm$0.06 \\
\addlinespace[2pt]
SoftmaxCorr [TMLR'24] & 0.212\,$\pm$0.18 & 0.061\,$\pm$0.27 & 0.224\,$\pm$0.47 & \textbf{0.083\,$\pm$0.06} \\
COT [NeurIPS'23] & 0.257\,$\pm$0.21 & \textbf{0.007\,$\pm$0.01} & 0.278\,$\pm$0.66 & 0.098\,$\pm$0.05 \\
Nuclear norm [ICML'23] & 0.277\,$\pm$0.20 & 0.013\,$\pm$0.03 & 0.280\,$\pm$0.66 & 0.098\,$\pm$0.05 \\
DOC [ICCV'21] & 0.221\,$\pm$0.20 & 0.086\,$\pm$0.08 & 0.284\,$\pm$0.69 & 0.095\,$\pm$0.04 \\
Distortion [CVPR'23] (anch.) & 0.220\,$\pm$0.20 & 0.087\,$\pm$0.08 & 0.284\,$\pm$0.69 & 0.096\,$\pm$0.04 \\
ATC-MC [ICLR'22] & 14.489\,$\pm$3.05 & 4.461\,$\pm$2.01 & 0.284\,$\pm$0.69 & 0.095\,$\pm$0.04 \\
ATC-NE [ICLR'22] & 15.125\,$\pm$0.31 & 3.985\,$\pm$2.31 & 0.284\,$\pm$0.69 & 0.095\,$\pm$0.04 \\
Entropy [ICLR'17] & 15.125\,$\pm$0.31 & 10.633\,$\pm$1.57 & 0.792\,$\pm$1.17 & 0.572\,$\pm$0.62 \\
Avg.\ confidence [ICLR'17] & 15.125\,$\pm$0.31 & 10.654\,$\pm$1.53 & 0.919\,$\pm$1.47 & 0.442\,$\pm$0.62 \\
\bottomrule
\end{tabular}
\end{table}

\textbf{We screen the state of the art in label-free performance estimation}, spanning its four
families: average confidence and entropy \citep{hendrycks2017baseline}, the two ATC
variants \citep{garg2022atc}, nuclear norm \citep{deng2023nuclear}, DOC
\citep{guillory2021doc} and SoftmaxCorr \citep{tu2024softmaxcorr}, COT
\citep{lu2023characterizing}, and teacher distortion \citep{liu2023pdquant}.
All nine are computed from predictive distributions alone. ATC and DOC need a calibration set, which we form with the teacher's arg-max rather than
with labels, so the two rows are teacher-pseudo-label variants of the published methods
(Supplement~\ref{supp:protocol}). Direct validation cannot use the unlabeled inputs, so the natural missing comparison is
a semi-supervised criterion using both \citep{shanmugam2025ssme}; we do not evaluate
one, and our margin over direct validation should be read with that in mind.

The confidence-based group fails by two orders of magnitude: on ResNet50 average
confidence and entropy essentially select the worst candidate, and the two ATC variants
select from the same collapsed region. Figure~\ref{fig:collapse} shows why: low-bit quantization produces degenerate predictors
at chance accuracy whose confidence \emph{exceeds} that of the candidates minimizing
target cross-entropy, $1.000$ against $0.408$ on ResNet50. Any
statistic reading confidence as correctness therefore orders the family close to
backwards. The two ViT settings invert the pattern: their degenerate candidates stay
under-confident and the same estimators select well, so the cause is the degradation
regime and not the estimator. Supplement~\ref{supp:sens} shows the magnitudes do not depend on probability clipping, and
Supplement Table~\ref{tab:accreg} that the same failure appears when estimators are scored on
accuracy. The diagnostics that distinguish the two regimes in our experiments are measurable only
with target labels.

\begin{figure*}[t]
\centering
\begin{tikzpicture}[scale=1.0,font=\footnotesize,>=stealth]
\begin{scope}[shift={(0,0.42)},scale=1.30,
              every node/.style={transform shape=false,font=\scriptsize}]
\coordinate (A) at (0,0); \coordinate (B) at (2.5,0); \coordinate (C) at (1.25,2.1);
\draw[black!55] (A)--(B)--(C)--cycle;
\node[below left=0pt] at (A) {$e_1$};
\node[below right=0pt] at (B) {$e_2$};
\node[above=0pt] at (C) {$e_y$};
\begin{scope}\clip (A)--(B)--(C)--cycle;
\fill[red!10] (A) circle (0.46); \fill[red!10] (B) circle (0.46); \fill[red!10] (C) circle (0.46);
\end{scope}
\coordinate (F) at (1.05,0.80);
\draw[dashed,black!45] (F) circle (0.34);
\draw[->,thick,green!45!black] (F)--($(F)!0.46!(C)$);
\coordinate (GD) at (0.70,0.80); \coordinate (GA) at (1.31,1.16);
\draw[->,black!70] (F)--(GD);
\draw[->,orange!85!black,thick] (F)--(GA);
\fill[black] (F) circle (1.6pt);
\fill[black!70] (GD) circle (1.6pt);
\fill[orange!85!black] (GA) circle (1.9pt);
\fill[red!70!black] (2.30,0.09) circle (1.9pt);
\node[anchor=east] at (0.62,0.80) {$g_D$};
\node[anchor=south west] at (1.36,1.18) {$g_A$};
\node[anchor=north] at (1.05,0.70) {$p_f$};
\node[anchor=west,green!35!black] at (1.14,1.52) {$e_y{-}p_f$};
\node[anchor=north east,red!60!black] at (0.72,-0.10) {high $\|p\|^2$};
\node[anchor=north west,red!70!black] at (1.72,-0.10) {overconc.};
\node[anchor=south,font=\footnotesize] at (1.25,2.44) {(a) the family on the simplex};
\end{scope}
\draw[black!55,line width=0.4pt] (5.60,0.55) rectangle (8.90,2.70);
\node[anchor=south,font=\footnotesize] at (7.25,2.80) {(b) ResNet50--DomainNet};
\draw[black!55] (5.60,0.55) -- (5.60,0.46);
\node[anchor=north,font=\scriptsize,black!70] at (5.60,0.44) {0};
\draw[black!55] (7.25,0.55) -- (7.25,0.46);
\node[anchor=north,font=\scriptsize,black!70] at (7.25,0.44) {0.5};
\draw[black!55] (8.90,0.55) -- (8.90,0.46);
\node[anchor=north,font=\scriptsize,black!70] at (8.90,0.44) {1};
\draw[black!12] (5.60,0.55) -- (8.90,0.55);
\draw[black!55] (5.60,0.55) -- (5.53,0.55);
\node[anchor=east,font=\scriptsize,black!70] at (5.51,0.55) {0.00};
\draw[black!12] (5.60,1.62) -- (8.90,1.62);
\draw[black!55] (5.60,1.62) -- (5.53,1.62);
\node[anchor=east,font=\scriptsize,black!70] at (5.51,1.62) {0.19};
\draw[black!12] (5.60,2.70) -- (8.90,2.70);
\draw[black!55] (5.60,2.70) -- (5.53,2.70);
\node[anchor=east,font=\scriptsize,black!70] at (5.51,2.70) {0.37};
\draw[red!60,dashed,line width=0.5pt] (5.60,0.74) -- (8.90,0.74);
\node[anchor=south east,font=\scriptsize,red!60] at (8.85,0.77) {chance $1/30$};
\draw[blue!60!black,line width=0.4pt,fill=blue!60!black,fill opacity=0.35] (6.15,1.13) circle (1.7pt);
\draw[blue!60!black,line width=0.4pt,fill=blue!60!black,fill opacity=0.35] (6.20,1.13) circle (1.7pt);
\draw[blue!60!black,line width=0.4pt,fill=blue!60!black,fill opacity=0.35] (6.17,1.66) circle (1.7pt);
\draw[blue!60!black,line width=0.4pt,fill=blue!60!black,fill opacity=0.35] (6.18,1.55) circle (1.7pt);
\draw[blue!60!black,line width=0.4pt,fill=blue!60!black,fill opacity=0.35] (6.45,1.78) circle (1.7pt);
\draw[blue!60!black,line width=0.4pt,fill=blue!60!black,fill opacity=0.35] (6.45,1.89) circle (1.7pt);
\draw[blue!60!black,line width=0.4pt,fill=blue!60!black,fill opacity=0.35] (6.55,1.93) circle (1.7pt);
\draw[blue!60!black,line width=0.4pt,fill=blue!60!black,fill opacity=0.35] (6.56,2.01) circle (1.7pt);
\draw[blue!60!black,line width=0.4pt,fill=blue!60!black,fill opacity=0.35] (7.17,2.47) circle (1.7pt);
\draw[blue!60!black,line width=0.4pt,fill=blue!60!black,fill opacity=0.35] (7.16,2.47) circle (1.7pt);
\draw[blue!60!black,line width=0.4pt,fill=blue!60!black,fill opacity=0.35] (7.19,2.47) circle (1.7pt);
\draw[blue!60!black,line width=0.4pt,fill=blue!60!black,fill opacity=0.35] (7.20,2.47) circle (1.7pt);
\draw[blue!60!black,line width=0.4pt,fill=blue!60!black,fill opacity=0.35] (6.14,1.16) circle (1.7pt);
\draw[blue!60!black,line width=0.4pt,fill=blue!60!black,fill opacity=0.35] (6.19,1.13) circle (1.7pt);
\draw[blue!60!black,line width=0.4pt,fill=blue!60!black,fill opacity=0.35] (6.17,1.59) circle (1.7pt);
\draw[blue!60!black,line width=0.4pt,fill=blue!60!black,fill opacity=0.35] (6.19,1.62) circle (1.7pt);
\draw[blue!60!black,line width=0.4pt,fill=blue!60!black,fill opacity=0.35] (6.44,1.93) circle (1.7pt);
\draw[blue!60!black,line width=0.4pt,fill=blue!60!black,fill opacity=0.35] (6.46,1.89) circle (1.7pt);
\draw[blue!60!black,line width=0.4pt,fill=blue!60!black,fill opacity=0.35] (6.57,2.01) circle (1.7pt);
\draw[blue!60!black,line width=0.4pt,fill=blue!60!black,fill opacity=0.35] (6.58,2.12) circle (1.7pt);
\draw[blue!60!black,line width=0.4pt,fill=blue!60!black,fill opacity=0.35] (7.09,2.43) circle (1.7pt);
\draw[blue!60!black,line width=0.4pt,fill=blue!60!black,fill opacity=0.35] (7.09,2.35) circle (1.7pt);
\draw[blue!60!black,line width=0.4pt,fill=blue!60!black,fill opacity=0.35] (7.29,2.35) circle (1.7pt);
\draw[blue!60!black,line width=0.4pt,fill=blue!60!black,fill opacity=0.35] (7.21,2.43) circle (1.7pt);
\draw[orange!85!black,line width=0.4pt,fill=orange!85!black,fill opacity=0.35] (6.12,1.09) circle (1.7pt);
\draw[orange!85!black,line width=0.4pt,fill=orange!85!black,fill opacity=0.35] (6.17,1.20) circle (1.7pt);
\draw[orange!85!black,line width=0.4pt,fill=orange!85!black,fill opacity=0.35] (6.22,1.51) circle (1.7pt);
\draw[orange!85!black,line width=0.4pt,fill=orange!85!black,fill opacity=0.35] (6.19,1.74) circle (1.7pt);
\draw[orange!85!black,line width=0.4pt,fill=orange!85!black,fill opacity=0.35] (6.43,1.82) circle (1.7pt);
\draw[orange!85!black,line width=0.4pt,fill=orange!85!black,fill opacity=0.35] (6.43,1.59) circle (1.7pt);
\draw[orange!85!black,line width=0.4pt,fill=orange!85!black,fill opacity=0.35] (6.72,1.70) circle (1.7pt);
\draw[orange!85!black,line width=0.4pt,fill=orange!85!black,fill opacity=0.35] (6.57,2.12) circle (1.7pt);
\draw[orange!85!black,line width=0.4pt,fill=orange!85!black,fill opacity=0.35] (7.20,1.16) circle (1.7pt);
\draw[orange!85!black,line width=0.4pt,fill=orange!85!black,fill opacity=0.35] (7.17,1.28) circle (1.7pt);
\draw[orange!85!black,line width=0.4pt,fill=orange!85!black,fill opacity=0.35] (7.27,2.24) circle (1.7pt);
\draw[orange!85!black,line width=0.4pt,fill=orange!85!black,fill opacity=0.35] (7.05,2.39) circle (1.7pt);
\draw[orange!85!black,line width=0.4pt,fill=orange!85!black,fill opacity=0.35] (6.15,1.09) circle (1.7pt);
\draw[orange!85!black,line width=0.4pt,fill=orange!85!black,fill opacity=0.35] (6.14,1.09) circle (1.7pt);
\draw[orange!85!black,line width=0.4pt,fill=orange!85!black,fill opacity=0.35] (6.24,1.20) circle (1.7pt);
\draw[orange!85!black,line width=0.4pt,fill=orange!85!black,fill opacity=0.35] (6.17,1.62) circle (1.7pt);
\draw[orange!85!black,line width=0.4pt,fill=orange!85!black,fill opacity=0.35] (6.39,1.78) circle (1.7pt);
\draw[orange!85!black,line width=0.4pt,fill=orange!85!black,fill opacity=0.35] (6.41,1.47) circle (1.7pt);
\draw[orange!85!black,line width=0.4pt,fill=orange!85!black,fill opacity=0.35] (6.57,1.74) circle (1.7pt);
\draw[orange!85!black,line width=0.4pt,fill=orange!85!black,fill opacity=0.35] (6.44,1.78) circle (1.7pt);
\draw[orange!85!black,line width=0.4pt,fill=orange!85!black,fill opacity=0.35] (8.89,0.74) circle (1.7pt);
\draw[orange!85!black,line width=0.4pt,fill=orange!85!black,fill opacity=0.35] (8.84,0.74) circle (1.7pt);
\draw[orange!85!black,line width=0.4pt,fill=orange!85!black,fill opacity=0.35] (7.26,1.97) circle (1.7pt);
\draw[orange!85!black,line width=0.4pt,fill=orange!85!black,fill opacity=0.35] (7.24,2.47) circle (1.7pt);
\draw[red!75!black,line width=0.4pt,fill=red!75!black,fill opacity=0.35] (6.13,0.86) circle (1.7pt);
\draw[red!75!black,line width=0.4pt,fill=red!75!black,fill opacity=0.35] (6.12,0.82) circle (1.7pt);
\draw[red!75!black,line width=0.4pt,fill=red!75!black,fill opacity=0.35] (6.16,0.82) circle (1.7pt);
\draw[red!75!black,line width=0.4pt,fill=red!75!black,fill opacity=0.35] (6.15,1.09) circle (1.7pt);
\draw[red!75!black,line width=0.4pt,fill=red!75!black,fill opacity=0.35] (6.24,1.28) circle (1.7pt);
\draw[red!75!black,line width=0.4pt,fill=red!75!black,fill opacity=0.35] (6.29,1.20) circle (1.7pt);
\draw[red!75!black,line width=0.4pt,fill=red!75!black,fill opacity=0.35] (6.49,1.32) circle (1.7pt);
\draw[red!75!black,line width=0.4pt,fill=red!75!black,fill opacity=0.35] (6.52,1.36) circle (1.7pt);
\draw[red!75!black,line width=0.4pt,fill=red!75!black,fill opacity=0.35] (8.87,0.74) circle (1.7pt);
\draw[red!75!black,line width=0.4pt,fill=red!75!black,fill opacity=0.35] (8.60,0.74) circle (1.7pt);
\draw[red!75!black,line width=0.4pt,fill=red!75!black,fill opacity=0.35] (7.15,0.82) circle (1.7pt);
\draw[red!75!black,line width=0.4pt,fill=red!75!black,fill opacity=0.35] (7.24,0.82) circle (1.7pt);
\draw[red!75!black,line width=0.4pt,fill=red!75!black,fill opacity=0.35] (8.41,0.86) circle (1.7pt);
\draw[red!75!black,line width=0.4pt,fill=red!75!black,fill opacity=0.35] (7.85,0.70) circle (1.7pt);
\draw[red!75!black,line width=0.4pt,fill=red!75!black,fill opacity=0.35] (8.33,0.74) circle (1.7pt);
\draw[red!75!black,line width=0.4pt,fill=red!75!black,fill opacity=0.35] (8.80,0.74) circle (1.7pt);
\draw[red!75!black,line width=0.4pt,fill=red!75!black,fill opacity=0.35] (8.90,0.74) circle (1.7pt);
\draw[red!75!black,line width=0.4pt,fill=red!75!black,fill opacity=0.35] (8.90,0.74) circle (1.7pt);
\draw[red!75!black,line width=0.4pt,fill=red!75!black,fill opacity=0.35] (8.62,0.74) circle (1.7pt);
\draw[red!75!black,line width=0.4pt,fill=red!75!black,fill opacity=0.35] (8.90,0.74) circle (1.7pt);
\draw[red!75!black,line width=0.4pt,fill=red!75!black,fill opacity=0.35] (5.85,0.74) circle (1.7pt);
\draw[red!75!black,line width=0.4pt,fill=red!75!black,fill opacity=0.35] (5.90,0.74) circle (1.7pt);
\draw[red!75!black,line width=0.4pt,fill=red!75!black,fill opacity=0.35] (8.90,0.74) circle (1.7pt);
\draw[red!75!black,line width=0.4pt,fill=red!75!black,fill opacity=0.35] (8.90,0.74) circle (1.7pt);
\draw[black!55,line width=0.4pt] (10.15,0.55) rectangle (13.45,2.70);
\node[anchor=south,font=\footnotesize] at (11.80,2.80) {(c) ViT--DomainNet};
\draw[black!55] (10.15,0.55) -- (10.15,0.46);
\node[anchor=north,font=\scriptsize,black!70] at (10.15,0.44) {0};
\draw[black!55] (11.80,0.55) -- (11.80,0.46);
\node[anchor=north,font=\scriptsize,black!70] at (11.80,0.44) {0.5};
\draw[black!55] (13.45,0.55) -- (13.45,0.46);
\node[anchor=north,font=\scriptsize,black!70] at (13.45,0.44) {1};
\draw[black!12] (10.15,0.55) -- (13.45,0.55);
\draw[black!55] (10.15,0.55) -- (10.08,0.55);
\node[anchor=east,font=\scriptsize,black!70] at (10.06,0.55) {0.00};
\draw[black!12] (10.15,1.62) -- (13.45,1.62);
\draw[black!55] (10.15,1.62) -- (10.08,1.62);
\node[anchor=east,font=\scriptsize,black!70] at (10.06,1.62) {0.38};
\draw[black!12] (10.15,2.70) -- (13.45,2.70);
\draw[black!55] (10.15,2.70) -- (10.08,2.70);
\node[anchor=east,font=\scriptsize,black!70] at (10.06,2.70) {0.76};
\draw[red!60,dashed,line width=0.5pt] (10.15,0.61) -- (13.45,0.61);
\node[anchor=south east,font=\scriptsize,red!60] at (13.40,0.64) {chance $1/50$};
\draw[blue!60!black,line width=0.4pt,fill=blue!60!black,fill opacity=0.35] (12.38,1.69) circle (1.7pt);
\draw[blue!60!black,line width=0.4pt,fill=blue!60!black,fill opacity=0.35] (12.39,1.70) circle (1.7pt);
\draw[blue!60!black,line width=0.4pt,fill=blue!60!black,fill opacity=0.35] (12.43,1.85) circle (1.7pt);
\draw[blue!60!black,line width=0.4pt,fill=blue!60!black,fill opacity=0.35] (12.44,1.86) circle (1.7pt);
\draw[blue!60!black,line width=0.4pt,fill=blue!60!black,fill opacity=0.35] (12.51,1.94) circle (1.7pt);
\draw[blue!60!black,line width=0.4pt,fill=blue!60!black,fill opacity=0.35] (12.51,1.94) circle (1.7pt);
\draw[blue!60!black,line width=0.4pt,fill=blue!60!black,fill opacity=0.35] (12.59,2.10) circle (1.7pt);
\draw[blue!60!black,line width=0.4pt,fill=blue!60!black,fill opacity=0.35] (12.59,2.11) circle (1.7pt);
\draw[blue!60!black,line width=0.4pt,fill=blue!60!black,fill opacity=0.35] (12.63,2.45) circle (1.7pt);
\draw[blue!60!black,line width=0.4pt,fill=blue!60!black,fill opacity=0.35] (12.63,2.45) circle (1.7pt);
\draw[blue!60!black,line width=0.4pt,fill=blue!60!black,fill opacity=0.35] (12.63,2.46) circle (1.7pt);
\draw[blue!60!black,line width=0.4pt,fill=blue!60!black,fill opacity=0.35] (12.63,2.46) circle (1.7pt);
\draw[blue!60!black,line width=0.4pt,fill=blue!60!black,fill opacity=0.35] (12.37,1.69) circle (1.7pt);
\draw[blue!60!black,line width=0.4pt,fill=blue!60!black,fill opacity=0.35] (12.38,1.70) circle (1.7pt);
\draw[blue!60!black,line width=0.4pt,fill=blue!60!black,fill opacity=0.35] (12.43,1.85) circle (1.7pt);
\draw[blue!60!black,line width=0.4pt,fill=blue!60!black,fill opacity=0.35] (12.43,1.87) circle (1.7pt);
\draw[blue!60!black,line width=0.4pt,fill=blue!60!black,fill opacity=0.35] (12.51,1.93) circle (1.7pt);
\draw[blue!60!black,line width=0.4pt,fill=blue!60!black,fill opacity=0.35] (12.52,1.94) circle (1.7pt);
\draw[blue!60!black,line width=0.4pt,fill=blue!60!black,fill opacity=0.35] (12.59,2.10) circle (1.7pt);
\draw[blue!60!black,line width=0.4pt,fill=blue!60!black,fill opacity=0.35] (12.59,2.11) circle (1.7pt);
\draw[blue!60!black,line width=0.4pt,fill=blue!60!black,fill opacity=0.35] (12.56,2.44) circle (1.7pt);
\draw[blue!60!black,line width=0.4pt,fill=blue!60!black,fill opacity=0.35] (12.56,2.44) circle (1.7pt);
\draw[blue!60!black,line width=0.4pt,fill=blue!60!black,fill opacity=0.35] (12.64,2.46) circle (1.7pt);
\draw[blue!60!black,line width=0.4pt,fill=blue!60!black,fill opacity=0.35] (12.64,2.46) circle (1.7pt);
\draw[orange!85!black,line width=0.4pt,fill=orange!85!black,fill opacity=0.35] (12.36,1.68) circle (1.7pt);
\draw[orange!85!black,line width=0.4pt,fill=orange!85!black,fill opacity=0.35] (12.37,1.69) circle (1.7pt);
\draw[orange!85!black,line width=0.4pt,fill=orange!85!black,fill opacity=0.35] (12.42,1.83) circle (1.7pt);
\draw[orange!85!black,line width=0.4pt,fill=orange!85!black,fill opacity=0.35] (12.43,1.83) circle (1.7pt);
\draw[orange!85!black,line width=0.4pt,fill=orange!85!black,fill opacity=0.35] (12.49,1.92) circle (1.7pt);
\draw[orange!85!black,line width=0.4pt,fill=orange!85!black,fill opacity=0.35] (12.50,1.93) circle (1.7pt);
\draw[orange!85!black,line width=0.4pt,fill=orange!85!black,fill opacity=0.35] (12.57,2.09) circle (1.7pt);
\draw[orange!85!black,line width=0.4pt,fill=orange!85!black,fill opacity=0.35] (12.57,2.09) circle (1.7pt);
\draw[orange!85!black,line width=0.4pt,fill=orange!85!black,fill opacity=0.35] (12.03,2.06) circle (1.7pt);
\draw[orange!85!black,line width=0.4pt,fill=orange!85!black,fill opacity=0.35] (12.03,2.06) circle (1.7pt);
\draw[orange!85!black,line width=0.4pt,fill=orange!85!black,fill opacity=0.35] (12.64,2.47) circle (1.7pt);
\draw[orange!85!black,line width=0.4pt,fill=orange!85!black,fill opacity=0.35] (12.64,2.47) circle (1.7pt);
\draw[orange!85!black,line width=0.4pt,fill=orange!85!black,fill opacity=0.35] (12.31,1.67) circle (1.7pt);
\draw[orange!85!black,line width=0.4pt,fill=orange!85!black,fill opacity=0.35] (12.32,1.69) circle (1.7pt);
\draw[orange!85!black,line width=0.4pt,fill=orange!85!black,fill opacity=0.35] (12.44,1.77) circle (1.7pt);
\draw[orange!85!black,line width=0.4pt,fill=orange!85!black,fill opacity=0.35] (12.42,1.79) circle (1.7pt);
\draw[orange!85!black,line width=0.4pt,fill=orange!85!black,fill opacity=0.35] (12.47,1.89) circle (1.7pt);
\draw[orange!85!black,line width=0.4pt,fill=orange!85!black,fill opacity=0.35] (12.46,1.90) circle (1.7pt);
\draw[orange!85!black,line width=0.4pt,fill=orange!85!black,fill opacity=0.35] (12.55,2.05) circle (1.7pt);
\draw[orange!85!black,line width=0.4pt,fill=orange!85!black,fill opacity=0.35] (12.55,2.06) circle (1.7pt);
\draw[orange!85!black,line width=0.4pt,fill=orange!85!black,fill opacity=0.35] (11.21,0.66) circle (1.7pt);
\draw[orange!85!black,line width=0.4pt,fill=orange!85!black,fill opacity=0.35] (11.14,0.65) circle (1.7pt);
\draw[orange!85!black,line width=0.4pt,fill=orange!85!black,fill opacity=0.35] (12.53,2.38) circle (1.7pt);
\draw[orange!85!black,line width=0.4pt,fill=orange!85!black,fill opacity=0.35] (12.54,2.39) circle (1.7pt);
\draw[red!75!black,line width=0.4pt,fill=red!75!black,fill opacity=0.35] (12.07,1.42) circle (1.7pt);
\draw[red!75!black,line width=0.4pt,fill=red!75!black,fill opacity=0.35] (12.04,1.47) circle (1.7pt);
\draw[red!75!black,line width=0.4pt,fill=red!75!black,fill opacity=0.35] (12.18,1.55) circle (1.7pt);
\draw[red!75!black,line width=0.4pt,fill=red!75!black,fill opacity=0.35] (12.14,1.59) circle (1.7pt);
\draw[red!75!black,line width=0.4pt,fill=red!75!black,fill opacity=0.35] (12.30,1.55) circle (1.7pt);
\draw[red!75!black,line width=0.4pt,fill=red!75!black,fill opacity=0.35] (12.24,1.60) circle (1.7pt);
\draw[red!75!black,line width=0.4pt,fill=red!75!black,fill opacity=0.35] (12.27,1.75) circle (1.7pt);
\draw[red!75!black,line width=0.4pt,fill=red!75!black,fill opacity=0.35] (12.25,1.79) circle (1.7pt);
\draw[red!75!black,line width=0.4pt,fill=red!75!black,fill opacity=0.35] (10.71,0.61) circle (1.7pt);
\draw[red!75!black,line width=0.4pt,fill=red!75!black,fill opacity=0.35] (10.53,0.60) circle (1.7pt);
\draw[red!75!black,line width=0.4pt,fill=red!75!black,fill opacity=0.35] (11.76,1.68) circle (1.7pt);
\draw[red!75!black,line width=0.4pt,fill=red!75!black,fill opacity=0.35] (11.86,1.86) circle (1.7pt);
\draw[red!75!black,line width=0.4pt,fill=red!75!black,fill opacity=0.35] (11.13,0.61) circle (1.7pt);
\draw[red!75!black,line width=0.4pt,fill=red!75!black,fill opacity=0.35] (10.86,0.60) circle (1.7pt);
\draw[red!75!black,line width=0.4pt,fill=red!75!black,fill opacity=0.35] (11.06,0.59) circle (1.7pt);
\draw[red!75!black,line width=0.4pt,fill=red!75!black,fill opacity=0.35] (10.94,0.63) circle (1.7pt);
\draw[red!75!black,line width=0.4pt,fill=red!75!black,fill opacity=0.35] (11.42,0.60) circle (1.7pt);
\draw[red!75!black,line width=0.4pt,fill=red!75!black,fill opacity=0.35] (11.06,0.61) circle (1.7pt);
\draw[red!75!black,line width=0.4pt,fill=red!75!black,fill opacity=0.35] (11.40,0.60) circle (1.7pt);
\draw[red!75!black,line width=0.4pt,fill=red!75!black,fill opacity=0.35] (10.95,0.58) circle (1.7pt);
\draw[red!75!black,line width=0.4pt,fill=red!75!black,fill opacity=0.35] (10.73,0.61) circle (1.7pt);
\draw[red!75!black,line width=0.4pt,fill=red!75!black,fill opacity=0.35] (10.43,0.63) circle (1.7pt);
\draw[red!75!black,line width=0.4pt,fill=red!75!black,fill opacity=0.35] (10.92,0.57) circle (1.7pt);
\draw[red!75!black,line width=0.4pt,fill=red!75!black,fill opacity=0.35] (11.01,0.58) circle (1.7pt);
\node[anchor=north,font=\footnotesize] at (10.30,0.13) {mean confidence on the unlabeled pool};
\node[rotate=90,anchor=south,font=\footnotesize] at (4.72,1.63) {target accuracy};
\draw[red!75!black,line width=0.4pt,fill=red!75!black,fill opacity=0.35] (6.60,-0.37) circle (1.7pt);
\node[anchor=west,font=\scriptsize] at (6.74,-0.37) {2--3 bits};
\draw[orange!85!black,line width=0.4pt,fill=orange!85!black,fill opacity=0.35] (8.55,-0.37) circle (1.7pt);
\node[anchor=west,font=\scriptsize] at (8.69,-0.37) {4--5 bits};
\draw[blue!60!black,line width=0.4pt,fill=blue!60!black,fill opacity=0.35] (10.50,-0.37) circle (1.7pt);
\node[anchor=west,font=\scriptsize] at (10.64,-0.37) {6--8 bits};
\end{tikzpicture}
\caption{Why the label-free rule needs help, and why confidence cannot supply it.
(a) Two candidates on a level set of the teacher distortion, $g_D$ and $g_A$, are
indistinguishable to the label-free rule; the supervised term breaks the tie, and
alignment prefers movement along $e_y-p_f$, while an overconcentrated candidate sits
near a vertex, confident and far from the teacher. (b, c) Each point is one of the
$72$ candidates of a single run, positioned by its mean confidence on the unlabeled
pool and its target accuracy, coloured by bit width, with the dashed line at chance.
On ResNet50 confidence and accuracy are negatively related across the family
($\rho=-0.32$ in this run); on ViT--DomainNet the relation is strongly positive
($\rho=+0.93$), which is why the same estimators select well there.}
\label{fig:collapse}
\end{figure*}

The output-distribution group is a different matter, and the multiple-comparison
correction sharpens it. On ResNet50 we cannot distinguish any of the four from the anchor with the available
clustered sample, and all four incur substantial mean regret. On MobileNetV2 three of them have clearly lower mean regret than the anchor. Teacher distortion is therefore not the strongest label-free rule available, and we do
not claim it is; what it
offers is stability under the degradation regime that defeats the confidence group,
which makes it usable as a shrinkage target rather than as a selector.

Comparing the two blocks of Table~\ref{tab:sota} answers whether labels are worth
acquiring, and the answer differs sharply by architecture. On ResNet50 anchoring reduces mean regret from $0.166$ to $0.007$ as the budget grows,
against $0.212$ for the best label-free rule. On MobileNetV2
a label-free rule wins at every budget up to the complete pool: COT attains $0.007$
out of a cross-entropy spread of $11.4$, and no supervised selector improves on it
until $n=300$. The contrast is consistent with the headroom the best label-free rule
leaves, which differs by a factor of twenty between the two, $1.4\%$ of the
attainable damage against $0.065\%$. Labels buy selection accuracy where the label-free problem is unsolved, a property of the candidate family, not of the selector.

Distortion is immune to that failure yet still leaves regret on the table. A post-hoc
audit (Table~\ref{tab:ece}) shows that on ResNet50 the candidate chosen by validation
cross-entropy has lower calibration error and cross-entropy but lower accuracy:
paired run-level differences are $-0.122$ $[-0.161, -0.082]$ in ECE and
$-0.208$ $[-0.310, -0.113]$ in cross-entropy. Calibration, not accuracy, is the dominant source of
disagreement, and this is what a $0$--$1$ loss cannot recover.
Supplement~\ref{supp:diagnostics} checks that the difficulty is not an artifact of one
catastrophic candidate by restricting the family.

\begin{table}[t]
\centering
\scriptsize
\setlength{\tabcolsep}{2pt}
\caption{Post-hoc calibration audit, computed on the first execution of each CNN
setting. Upper block: medians over its $15$ runs per
architecture, ECE with $15$ equal-width bins, accuracy top-$1$. Lower block: mean
run-level differences (validation CE minus distortion) with run-bootstrap $95\%$
intervals, leading zeros omitted. Test labels are used only for explanation.}
\label{tab:ece}
\begin{tabular}{@{}llrrr@{}}
\toprule
Setting & Selected by & ECE & Acc. & CE \\
\midrule
ResNet50 & Distortion & 0.216 & 0.332 & 2.865 \\
         & Validation CE & 0.083 & 0.327 & 2.745 \\
MobileNetV2 & Distortion & 0.227 & 0.308 & 3.019 \\
         & Validation CE & 0.195 & 0.308 & 2.884 \\
\midrule
\multicolumn{5}{@{}l}{\emph{Paired difference, validation CE minus distortion}} \\
\multicolumn{2}{@{}l}{ResNet50} & $-.122$ & $-.042$ & $-.208$ \\
\multicolumn{2}{@{}l}{\quad $95\%$ CI} & $[-.161,-.082]$ & $[-.056,-.027]$ & $[-.310,-.113]$ \\
\multicolumn{2}{@{}l}{MobileNetV2} & $-.013$ & $+.006$ & $-.084$ \\
\multicolumn{2}{@{}l}{\quad $95\%$ CI} & $[-.019,-.007]$ & $[+.002,+.009]$ & $[-.121,-.049]$ \\
\bottomrule
\end{tabular}
\end{table}

\subsection{Selection with Scarce Clean Labels}
\label{subsec:budgets}
Spending a small number of target labels changes the picture.
Table~\ref{tab:main} reports median regret across budgets for the three selectors.

\begin{table}[t]
\centering\scriptsize\setlength{\tabcolsep}{3pt}
\caption{Median target cross-entropy selection regret with clean labels, over
run--labeled-subset outcomes. Dist.\ is label-free and constant across budgets.
Boldface marks the lowest median among the supervised selectors in each row. The largest budget is the largest evaluated in each setting,
not the size of its labeled pool. Runs are $45$ per CNN architecture, pooling three
end-to-end executions, and $17$ and $27$ on the two ViT settings.}
\label{tab:main}
\begin{tabular}{@{}llcccc@{}}
\toprule
Setting & $n$ & Dist.\ & Val-CE & CE-combo & Align \\
\midrule
ResNet50--DomainNet & 10 & 0.2323 & \textbf{0.1756} & 0.1941 & 0.1870 \\
 & 25 & 0.2323 & 0.0972 & \textbf{0.0948} & 0.1143 \\
 & 50 & 0.2323 & 0.0571 & \textbf{0.0498} & 0.0635 \\
 & 100 & 0.2323 & \textbf{0.0267} & 0.0271 & 0.0342 \\
 & 150 & 0.2323 & \textbf{0.0232} & 0.0239 & 0.0270 \\
 & 300 & 0.2323 & \textbf{0.0000} & 0.0012 & 0.0024 \\
\midrule
MobileNetV2--DomainNet & 10 & 0.0778 & 0.2479 & 0.1055 & \textbf{0.0816} \\
 & 25 & 0.0778 & 0.0644 & \textbf{0.0349} & 0.0399 \\
 & 50 & 0.0778 & \textbf{0.0194} & 0.0221 & 0.0241 \\
 & 100 & 0.0778 & 0.0073 & \textbf{0.0046} & 0.0049 \\
 & 150 & 0.0778 & \textbf{0.0024} & 0.0033 & 0.0026 \\
 & 300 & 0.0778 & \textbf{0.0000} & 0.0000 & 0.0000 \\
\midrule
ViT--DomainNet & 10 & 0.0200 & 0.1722 & 0.0725 & \textbf{0.0501} \\
 & 25 & 0.0200 & 0.0746 & 0.0445 & \textbf{0.0434} \\
 & 50 & 0.0200 & 0.0482 & \textbf{0.0207} & 0.0295 \\
 & 100 & 0.0200 & 0.0188 & \textbf{0.0092} & 0.0183 \\
 & 150 & 0.0200 & 0.0151 & \textbf{0.0082} & 0.0110 \\
\midrule
ViT--CIFAR-20 & 10 & 0.1009 & 0.1938 & \textbf{0.0990} & 0.1325 \\
 & 25 & 0.1009 & 0.0952 & \textbf{0.0856} & 0.1010 \\
 & 50 & 0.1009 & \textbf{0.0436} & 0.0638 & 0.0764 \\
 & 100 & 0.1009 & \textbf{0.0352} & 0.0483 & 0.0715 \\
 & 150 & 0.1009 & \textbf{0.0263} & 0.0421 & 0.0531 \\
 & 200 & 0.1009 & \textbf{0.0219} & 0.0382 & 0.0458 \\
\bottomrule
\end{tabular}
\end{table}



With ten labels CE-combo improves on direct validation in \emph{all four} settings. Across the twenty-three comparisons of the budget ablation
(Supplement Table~\ref{tab:sweep}) it wins in eleven and loses in twelve, strongly
budget-dependent: four of four at ten labels, three of four at twenty-five, and four of
fifteen from fifty onward. At ten labels the mean regret falls from $0.227$ to $0.103$ on
MobileNetV2, from $0.222$ to $0.116$ on ViT--CIFAR-20, from $0.208$ to $0.140$ on
ViT--DomainNet and from $0.182$ to $0.166$ on ResNet50. On ResNet50 the
median does not fall, so the gain there comes from removing severe failures rather
than from shifting the typical outcome. From fifty labels onward it loses in eleven of fifteen comparisons, so its stabilizing
benefit usually stops compensating for anchor bias once the supervised estimate is
precise. The value of anchoring is a small-budget effect.

Anchoring also protects the upper tail: at ten labels the $95$th percentile of regret falls from $0.58$ to $0.49$ on
ResNet50 and from $1.18$ to $0.71$ on MobileNetV2, and $\\Pr(R>0.1)$ from
$0.30$ to $0.23$ on the latter (Supplement Table~\ref{tab:tail}). Paired run-level inference for both
supervision regimes is in Table~\ref{tab:paired}.



\subsection{What the Directional Term Contributes}
\label{subsec:direction}
The directional term admits a control that isolates exactly what it claims to add.
Permuting the observed labels before the statistic is formed leaves the marginal
distribution and scale of $A_{\mathcal S}$ intact and destroys only the correspondence
between labels and inputs, so the control keeps the shrinkage of \eqref{eq:family} and
none of its direction. It receives the identical coefficient grid and cross-validation.

Supplement Table~\ref{tab:control} reports it. The control behaves as its construction
predicts, though not exactly: its regret tracks pure distortion to within $0.020$ across settings and budgets, with a
mean absolute deviation of $0.006$. The coefficient it selects is zero in $71\%$ of ResNet50 and $76\%$ of MobileNetV2 runs,
median zero in both, so with the correspondence destroyed the procedure usually reverts to the anchor, and
the residual deviation comes from the runs where it does not.
That makes the comparison informative only where reverting to the anchor is
costly.



ResNet50 is that case, and there the direction contributes throughout: alignment beats the control at every budget, from $0.1870$ against $0.2170$ at ten
labels to $0.0270$ against $0.2320$ at $150$, the gap widening as labels accumulate. In the other three settings distortion is already within a few hundredths of the
optimum, so reverting to it forfeits little and the control is not separated from
alignment. The directional term therefore earns its place
conditionally. Relating the gain from anchoring to the regret of pure distortion across the nineteen setting--budget cells beyond the smallest budget shows the condition is not a property of the anchor: the
association is weak and of the wrong sign (Spearman $-0.25$), which we read as a descriptive diagnostic, not a formal test, since the cells are not
independent. What governs
the gain is the label budget. With ten labels anchoring improves on direct
validation in all four settings, and from fifty labels onward it loses in three of them. That is what a shrinkage rule should do: once the supervised estimate is reliable, the
anchor stops helping.

The two components behave heterogeneously across settings rather than along one axis. Retaining only the teacher component, whose statistic uses no
observed class, is the best of the three variants on ResNet50 at ten labels ($0.145$) and the worst on the other three settings at that budget. Alignment is thus not a mechanism of uniform value. It is a correction whose value varies across settings, and anchor regret does not
predict that variation; the label budget is the clearer moderator.

\subsection{Selection under Symmetric Label Corruption}
\label{subsec:noise}
Each sampled label is retained with probability $1-\eta$ and otherwise replaced
uniformly by one of the $K-1$ incorrect classes, so a corrupted label always differs from
the clean one. Corruption changes only the labels supplied to the selectors, never the candidates or
test set.

Table~\ref{tab:noise} shows that anchoring becomes more valuable as supervision
degrades. At $\eta=0.4$ both anchored selectors improve on direct validation on both
architectures. The directional variant attains the lowest median in each case, reducing
regret from $0.165$ to $0.102$ on ResNet50 and from $0.779$ to $0.393$ on
MobileNetV2.



\subsection{Scarce and Corrupted Labels Together}
\label{subsec:grid}
The experiments so far vary one factor at a time, leaving out the regime a practitioner
is most likely to face. Table~\ref{tab:grid} crosses the two, giving every selector the
same $n$ labels at the same rate and corrupting those its coefficient selection consumes
as well.

\begin{table*}[t]
\centering
\scriptsize
\setlength{\tabcolsep}{4.5pt}
\caption{Label budget crossed with corruption rate: mean target cross-entropy regret over $15$ runs per
architecture and ten corruption realizations per cell. Unlike the clean-label tables
this grid is computed on the first complete execution of each setting, since crossing
five budgets with three corruption rates and ten realizations makes pooling all three
executions prohibitively slow.
Every selector receives the same $n$ labels at the same rate $\eta$, and the labels
its coefficient selection consumes are corrupted too. Boldface marks the lowest mean
in each cell.}
\label{tab:grid}
\begin{tabular}{@{}ll ccc ccc ccc@{}}
\toprule
& & \multicolumn{3}{c}{$\eta=0$} & \multicolumn{3}{c}{$\eta=0.2$} & \multicolumn{3}{c}{$\eta=0.4$} \\
\cmidrule(lr){3-5}\cmidrule(lr){6-8}\cmidrule(lr){9-11}
Setting & $n$ & Val-CE & Val-Acc & CE-combo & Val-CE & Val-Acc & CE-combo & Val-CE & Val-Acc & CE-combo \\
\midrule
ResNet50--DomainNet & 10 & 0.168 & 0.630 & \textbf{0.159} & 0.326 & 1.497 & \textbf{0.234} & 0.393 & 2.485 & \textbf{0.278} \\
 & 25 & 0.100 & 0.249 & \textbf{0.091} & 0.145 & 0.427 & \textbf{0.111} & 0.361 & 0.792 & \textbf{0.230} \\
 & 50 & \textbf{0.051} & 0.262 & 0.053 & 0.118 & 0.269 & \textbf{0.083} & 0.317 & 0.262 & \textbf{0.216} \\
 & 100 & \textbf{0.025} & 0.227 & 0.030 & 0.104 & 0.239 & \textbf{0.071} & 0.288 & 0.240 & \textbf{0.194} \\
 & 300 & \textbf{0.008} & 0.219 & 0.010 & 0.087 & 0.231 & \textbf{0.073} & 0.232 & 0.240 & \textbf{0.202} \\
\midrule
MobileNetV2--DomainNet & 10 & 0.285 & 1.140 & \textbf{0.127} & 0.728 & 2.021 & \textbf{0.406} & 0.864 & 2.219 & \textbf{0.485} \\
 & 25 & 0.107 & 0.201 & \textbf{0.046} & 0.308 & 0.464 & \textbf{0.143} & 0.708 & 0.851 & \textbf{0.513} \\
 & 50 & 0.048 & 0.054 & \textbf{0.027} & 0.285 & \textbf{0.079} & 0.135 & 0.765 & \textbf{0.230} & 0.601 \\
 & 100 & 0.015 & 0.044 & \textbf{0.008} & 0.260 & \textbf{0.037} & 0.136 & 0.768 & \textbf{0.036} & 0.684 \\
 & 300 & \textbf{0.001} & 0.018 & 0.001 & 0.193 & \textbf{0.019} & 0.099 & 0.844 & \textbf{0.028} & 0.803 \\
\bottomrule
\end{tabular}
\end{table*}

Supplement~\ref{supp:anchorvar} shows that neither the input set on which the anchor is
evaluated nor its identity as the teacher is essential to what follows.
Anchoring helps far more here than with clean labels. Across the twenty cells with
$\eta>0$ CE-combo improves on direct validation in every one, against eleven of twenty-three in the clean sweep. The margins are large where both scarcity and
corruption bite: at ten labels and $\eta=0.4$ mean regret falls from $0.393$ to
$0.278$ on ResNet50 and from $0.864$ to $0.485$ on MobileNetV2. Corruption inflates the variance of the supervised estimate,
and that is precisely when shrinkage should pay. Here it pays at every budget, and not merely the smallest.

Validation accuracy behaves as the attenuation identity predicts and thereby marks
the limit of the anchored rule. At ten labels it is catastrophic, above $1.1$ on MobileNetV2, since a $30$-class
accuracy estimate from ten examples is almost uninformative. But its ordering degrades only by a common factor, so once the
estimate is precise it becomes the most corruption-tolerant of the three: on MobileNetV2 it attains the lowest regret in all six cells with
$n\ge 50$ and $\eta>0$, reaching $0.028$ at $n=300$ and $\eta=0.4$ where CE-combo
reaches $0.803$. This points to a division of labour: anchor when labels are few, and consider a bounded
accuracy criterion when they are plentiful but unreliable, a pattern we observe on
MobileNetV2 and not on ResNet50.

\subsection{Does Selection Survive a Cost Constraint?}
\label{subsec:cost}
Minimizing predictive cross-entropy over the whole family answers a question a
practitioner would not ask alone, since candidates differ in deployment cost. We repeat
the exercise as a constrained problem, $\min_g L(g)$
subject to $C(g)\le B$, with $C$ the weight memory relative to full precision, with the frontier in
Supplement Table~\ref{tab:cost}.

Supplement Table~\ref{tab:costsel} applies the selectors inside the constraint, with the
candidate set, the labeled sample, the coefficient search and the oracle all
restricted to $\mathcal G_B$. The small-budget pattern survives: at ten labels
CE-combo improves on direct validation in seven of the eight
budget--architecture cells and loses in one, the tightest budget on MobileNetV2,
where the admissible family is small enough that the choice barely matters. Restricting the problem neither removes the advantage of anchoring at small budgets nor
creates one at large budgets.

Two effects separate. The price of the constraint is strongly architecture-dependent: on ResNet50 the frontier
is nearly flat above a fifth of full precision and $\le 22\%$ costs $0.007$, while on
MobileNetV2 it is steep everywhere and $\le 22\%$ already costs $0.868$. The effect that matters for selection survives intact: within every budget the spread
between the worst and best admissible candidate stays above $10$ in cross-entropy,
essentially unchanged. Constraining cost removes the most expensive candidates but
does not make the remainder interchangeable. The problem we study therefore survives a budget instead of being dissolved by it.

\begin{table}[t]
\centering
\scriptsize
\setlength{\tabcolsep}{4pt}
\caption{Empirical validation of the symmetric-corruption attenuation identity at the
complete labeled pool, pooling all three executions, $45$ runs per architecture and
ten corruption realizations per rate. Entries are through-origin slopes relating the
corrupted quantity to its clean counterpart. The teacher component is unchanged to
numerical precision at every rate.}
\label{tab:attenuation}
\begin{tabular}{@{}llccc@{}}
\toprule
Setting & $\eta$ & Predicted & Label comp.\ & Val.\ accuracy \\
\midrule
ResNet50--DomainNet & 10\% & 0.897 & 0.881 & 0.881 \\
 & 20\% & 0.793 & 0.800 & 0.802 \\
 & 30\% & 0.690 & 0.683 & 0.687 \\
 & 40\% & 0.586 & 0.577 & 0.579 \\
\midrule
MobileNetV2--DomainNet & 10\% & 0.897 & 0.882 & 0.882 \\
 & 20\% & 0.793 & 0.803 & 0.802 \\
 & 30\% & 0.690 & 0.688 & 0.689 \\
 & 40\% & 0.586 & 0.575 & 0.575 \\
\bottomrule
\end{tabular}
\end{table}

\subsection{Does the Theory Predict the Measurement?}
\label{subsec:attenuation}
The attenuation result makes a numerical prediction. At the largest budget and
$\eta\in\{0,0.1,0.2,0.3,0.4\}$ we regress the corrupted observed-label component through
the origin on its clean counterpart, across candidates, subsets and runs; the predicted
slope is $a_\eta=1-\frac{K}{K-1}\eta$. We repeat it for mean-centred validation
accuracy, whose predicted slope is the same, since the derivation uses only linearity in
the label indicator.

Table~\ref{tab:attenuation} reports both, with a maximum absolute discrepancy of
$0.016$ at the lowest corruption rate and below $0.011$ elsewhere. Two consequences
follow. The corruption implementation does what the analysis says, so the corruption
results measure the intended quantity. And validation accuracy attenuates by the same
factor as the Brier alignment, so the identity holds for the whole class and not for
one statistic alone, which removes attenuation as a reason to prefer alignment over
noisy validation accuracy. It is not a performance guarantee.

With clean labels the coefficient has median $7.5$ on ResNet50 and $3.5$ on MobileNetV2,
reverting to zero in $11\%$ and $29\%$ of runs; under $40\%$ corruption on MobileNetV2 it
reaches the grid maximum in $53\%$, so the grid binds there; Supplement Table~\ref{tab:lambda} and Supplement Figure~\ref{fig:lambda}
give its distribution and the underlying regret curve. The frequency with which the procedure reverts to pure distortion is highest where the
labeled signal is least useful, so the coefficient does the work the construction
intends rather than sitting at a grid boundary.

\FloatBarrier

\section{Conclusion}
\label{sec:conclusion}
Selecting among fixed quantized variants of a shared teacher is a distinct problem from
building them, and today's estimators do not solve it. Minimum teacher distortion
selects the same eight-bit configuration in all $134$ candidate families, and that
candidate never minimizes target cross-entropy. Confidence-based estimators do worse than
uninformative: where quantization produces degenerate predictors more confident than the
healthy ones they select from the wrong end of the family, and no estimator we evaluate
detects which regime holds.

Anchoring converts that stability into a usable selector. With ten target labels it
reduces mean regret below direct validation in all four settings, and under corruption
in every one of the twenty budget--rate cells. The identities show the members differ
only in which function of the labeled sample is added, and the attenuation result gives
the class of criteria whose label signal decays at one rate. Neither bounds selection
regret, which we establish empirically.

The results also delimit the method. Anchoring is a general construction, not a property
of the teacher: a stronger anchor yields a better anchored selector. And where labels are plentiful
but unreliable a bounded accuracy criterion can do better, as on MobileNetV2.

\bibliography{refs}

\clearpage
\appendix
\section*{Supplementary Material}
\setcounter{section}{0}
\setcounter{table}{0}
\renewcommand{\thetable}{S\arabic{table}}
\makeatletter
\@ifundefined{theHsection}{}{\renewcommand{\theHsection}{supp.\arabic{section}}}
\@ifundefined{theHtable}{}{\renewcommand{\theHtable}{supp.\arabic{table}}}
\makeatother

\begin{table*}[!tb]
\centering
\scriptsize
\setlength{\tabcolsep}{4pt}
\caption{Notation used in the framework and the supplementary analyses.}
\label{tab:supp_notation}
\begin{tabular}{@{}lp{0.33\textwidth}lp{0.33\textwidth}@{}}
\toprule
Symbol & Meaning & Symbol & Meaning \\
\midrule
$f$, $p_f(\cdot\mid x)$ & fixed teacher and its predictive distribution
& $g\in\mathcal G$, $p_g(\cdot\mid x)$ & candidate and its predictive distribution \\
$\mathcal G$, $M$ & fixed candidate family and its cardinality
& $\mathcal U$ & unlabeled target input pool \\
$\mathcal S$, $n$ & labeled target sample and its size
& $\mathcal T$ & disjoint target test set, evaluation only \\
$e_y$ & one-hot vector for class $y$
& $z_f(x)$, $z_g(x)$ & teacher and candidate outputs in a chosen representation \\
$\delta_g^p(x)$ & probability change $p_g(x)-p_f(x)$
& $\Phi_{\mathcal S}(g)$ & supervised term added to the distortion \\
$D_{\mathcal V}^{d}(g,f)$ & teacher distance of type $d$ on input set $\mathcal V$
& $a_\ell(x,y;g,f)$ & per-example alignment contribution for loss $\ell$ \\
$A_{\mathcal S}(g,f)$ & alignment on $\mathcal S$
& $\widehat{\mathrm{Br}}_{\mathcal S}$ & empirical Brier score on $\mathcal S$ \\
$A_{\mathcal S}^{\mathrm{lab}}$, $A_{\mathcal S}^{\mathrm{teach}}$ & observed-label and teacher components
& $A_{\mathcal S}^{\pi}$ & permutation control, residual directions $e_y-p_f(x)$ permuted within $\mathcal S$ \\
$\alpha$ & weight of the supervised term in \eqref{eq:family}
& $K$ & number of classes \\
$\lambda$, $\lambda_{\max}$ & weight of the directional term and the largest value in its grid
& $\delta_g^z(x)$ & output change $z_g(x)-z_f(x)$ in the chosen representation \\
$L_{\mathrm{test}}(g)$ & held-out target cross-entropy
& $R(\hat g)$ & cross-entropy selection regret \\
$\eta$ & label-corruption probability
& $C(g)$ & deployment cost of candidate $g$ \\
$F$ & number of cross-validation folds
& $\mathcal G_B$ & candidates admissible under budget $B$ \\
\bottomrule
\end{tabular}
\end{table*}

Table~\ref{tab:supp_notation} collects the notation used throughout the paper
and this supplement.

This supplement is organised as follows, with the part of the main paper each
section supports. Supplement~A states and proves the two formal results
summarised in Section~3 and delimits what they do and do not bound. Supplement~B
gives the full experimental protocol behind Section~4, including the candidate
grid, the selector definitions, the coefficient search stated as
Algorithm~\ref{alg:anchored}, the seeding scheme and the run accounting.
Supplement~C reports the calibration audit and the candidate-family sensitivity
checks that accompany Table~2. Supplement~D collects the tables the main text
cites but does not reproduce, together with the additional analyses that
accompany them. Supplement~E quantifies
the two degradation regimes over all executions and is the source of the
confidence figures quoted in Section~4. Supplement~F documents the runs excluded
from the final cohort.

\section{Score Identities and Theoretical Boundaries}
\label{supp:identities}

\subsection{Smoothness interpretation}
Let $\delta_g^z(x)=z_g(x)-z_f(x)$. If $\ell(\cdot,y)$ has an $L$-Lipschitz
gradient along the segment joining $z_f(x)$ and $z_g(x)$, the descent lemma
gives
$\ell(z_g(x),y)-\ell(z_f(x),y)\le
\langle\nabla_z\ell(z_f(x),y),\delta_g^z(x)\rangle
+\frac{L}{2}\|\delta_g^z(x)\|_2^2$.
With $a_\ell(x,y;g,f)=-\langle\nabla_z\ell(z_f(x),y),\delta_g^z(x)\rangle$,
averaging over $\mathcal S$ gives
\[
\widehat L_{\mathcal S}(g)-\widehat L_{\mathcal S}(f)
\le
-A_{\ell,\mathcal S}(g,f)
+\frac{L}{2|\mathcal S|}\sum_{(x,y)\in\mathcal S}\|\delta_g^z(x)\|_2^2
\]
For probability outputs, pointwise Pinsker control gives
$\|\delta_g^p(x)\|_2^2\le\|\delta_g^p(x)\|_1^2
\le 2\,\mathrm{KL}(p_f(\cdot\mid x)\|p_g(\cdot\mid x))$, which provides direct empirical control when the KL and perturbation terms are averaged
over the same inputs. Our implemented selector averages the distortion over the whole
pool and the supervised term over $\mathcal S$, so the bound applies to the restriction
of both terms to a common input set and not to the score as evaluated. We present it as a local structural interpretation
rather than as practical control: the Lipschitz constant of the cross-entropy gradient
in probability space grows with the inverse of the probability floor, so the resulting
bound is numerically weak at the floors we use.

\subsection{Exact quadratic identity}
\begin{proposition}[Quadratic identity]\label{prop:quad}
Write $\delta_g^p(x)=p_g(x)-p_f(x)$, and on a labeled set $\mathcal S$ let
$D_{\mathcal S}^{(2)}(g,f)=|\mathcal S|^{-1}\sum\|\delta_g^p(x)\|_2^2$ and
$A_{\mathcal S}^{\mathrm B}(g,f)=|\mathcal S|^{-1}\sum\langle\delta_g^p(x),e_y-p_f(x)\rangle$.
If both terms are averaged over the same $\mathcal S$, then
\[
D_{\mathcal S}^{(2)}(g,f)-2A_{\mathcal S}^{\mathrm B}(g,f)
=\widehat{\mathrm{Br}}_{\mathcal S}(g)-\widehat{\mathrm{Br}}_{\mathcal S}(f)
\]
\end{proposition}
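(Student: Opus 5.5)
The plan is a pointwise expansion followed by averaging. The identity holds example by example, and both sides are averages over the same set $\mathcal S$. It therefore suffices to show, for each fixed $(x,y)\in\mathcal S$,
\[
\|\delta_g^p(x)\|_2^2-2\langle\delta_g^p(x),e_y-p_f(x)\rangle
=\|p_g(x)-e_y\|_2^2-\|p_f(x)-e_y\|_2^2 .
\]
Here the empirical Brier score is understood in its multiclass form, $\widehat{\mathrm{Br}}_{\mathcal S}(g)=|\mathcal S|^{-1}\sum\|p_g(x)-e_y\|_2^2$. I will state this convention explicitly at the start, since the identity depends on the unnormalized squared-error form; a factor $1/K$ or $1/2$ would rescale both sides consistently.

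For the pointwise step, write $u=p_f(x)-e_y$ and $\delta=\delta_g^p(x)$. Then $p_g(x)-e_y=u+\delta$, and
\[
\|u+\delta\|_2^2-\|u\|_2^2=\|\delta\|_2^2+2\langle\delta,u\rangle=\|\delta\|_2^2-2\langle\delta,e_y-p_f(x)\rangle .
\]
This is exactly the left-hand side of the pointwise claim. Summing over $\mathcal S$ and dividing by $|\mathcal S|$ gives the proposition, by linearity of the empirical average.

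No analytic assumptions enter. In particular, nothing requires the vectors to lie on the simplex, so the identity holds for arbitrary output vectors. The only real obstacle is bookkeeping, and the hypothesis that both terms are averaged over the same $\mathcal S$ is precisely what it secures. If $D^{(2)}$ were averaged over the unlabeled pool $\mathcal U$ instead, as in the implemented selector, then the term $\|\delta\|^2$ would not cancel against the Brier difference on $\mathcal S$. I will remark on this after the proof, and also note the consequence: with $\alpha=2$ and squared distortion, the anchored score of \eqref{eq:family} with the alignment term ranks candidates exactly as empirical Brier on $\mathcal S$ does.
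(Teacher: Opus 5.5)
Your proposal is correct and follows the paper's proof: both rest on the pointwise identity $\|p_g(x)-e_y\|_2^2-\|p_f(x)-e_y\|_2^2=\|\delta_g^p(x)\|_2^2-2\langle\delta_g^p(x),e_y-p_f(x)\rangle$, which is then averaged over the common set $\mathcal S$. Your substitution $u=p_f(x)-e_y$ just writes out the one-line expansion that the paper states without derivation.
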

\noindent\emph{Proof.} Every labeled example satisfies
$\|p_g(x)-e_y\|_2^2-\|p_f(x)-e_y\|_2^2
=\|\delta_g^p(x)\|_2^2-2\langle\delta_g^p(x),e_y-p_f(x)\rangle$;
averaging over $\mathcal S$ gives the claim. \hfill$\square$

Evaluating both sides candidate by candidate over the 90 CNN runs,
the two agree to $1.3\times10^{-15}$, that is, to numerical precision.
The teacher term is candidate-independent, so with $\lambda=2$ and both terms
on the same inputs the criterion selects exactly the same candidate as
empirical Brier validation. This is an identity between selection criteria,
presented as a consistency check rather than a new scoring rule.

\subsection{Observed-label and teacher components}
By linearity, $A_{\mathcal S}^{\mathrm B}
=A_{\mathcal S}^{\mathrm{lab}}+A_{\mathcal S}^{\mathrm{teach}}$ with
$A_{\mathcal S}^{\mathrm{lab}}=|\mathcal S|^{-1}\sum\langle\delta_g^p(x),e_y\rangle$ and
$A_{\mathcal S}^{\mathrm{teach}}=-|\mathcal S|^{-1}\sum\langle\delta_g^p(x),p_f(x)\rangle$.
For any common input set $\mathcal V$,
\begin{equation*}
\begin{split}
&D_{\mathcal V}^{(2)}(g,f)-2A_{\mathcal V}^{\mathrm{teach}}(g,f)\\
&\qquad=\frac{1}{|\mathcal V|}\sum_{x\in\mathcal V}
\left[\|p_g(x)\|_2^2-\|p_f(x)\|_2^2\right]
\end{split}
\end{equation*}
so minimizing the canonical fully quadratic teacher-only criterion favors the
candidate with the smallest average squared predictive norm. Taking
$\mathcal V=\mathcal U$ gives a label-free predictive-concentration criterion.
This does not imply that the empirical KL-based teacher-component ablation is
identical to concentration ranking, since that ablation retains KL distortion
and a selected coefficient.

\subsection{Attenuation under symmetric corruption}
\begin{proposition}[Common attenuation]\label{prop:atten}
Let a label-dependent criterion be $a_g(x)^\top e_y$ and let each label be replaced,
with probability $\eta$, by a class drawn uniformly from the $K-1$ incorrect ones. Then
\[
\mathbb E\!\left[a_g^\top e_{\widetilde y}\mid y\right]
=\left(1-\tfrac{K}{K-1}\eta\right)a_g^\top e_y+\tfrac{\eta}{K-1}\,\mathbf 1^\top a_g
\]
If $\mathbf 1^\top a_g$ does not vary across candidates, the second term is a common
offset and candidate comparisons are multiplied by the single factor
$1-\frac{K}{K-1}\eta$. The condition holds for teacher contrasts, whose coordinates sum
to zero, and for accuracy indicators, whose coordinates sum to one; it fails for
cross-entropy, whose coefficient vector $-\log p_g$ has a candidate-dependent
coordinate sum.
\end{proposition}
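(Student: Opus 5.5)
The approach is to compute the conditional law of the corrupted label and use linearity of expectation, since the criterion is linear in the one-hot indicator. Given the clean label $y$, the corruption model of Proposition~\ref{prop:atten} gives $\Pr(\widetilde y=y\mid y)=1-\eta$ and $\Pr(\widetilde y=k\mid y)=\eta/(K-1)$ for each $k\neq y$. I would first write the conditional mean of the indicator vector as
\[
\mathbb E[e_{\widetilde y}\mid y]=(1-\eta)e_y+\tfrac{\eta}{K-1}(\mathbf 1-e_y)
=\left(1-\eta-\tfrac{\eta}{K-1}\right)e_y+\tfrac{\eta}{K-1}\mathbf 1 .
\]
Then I would note that $1-\eta-\frac{\eta}{K-1}=1-\frac{K}{K-1}\eta$.

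Next, since $a_g(x)$ depends only on the input and the candidate (and the teacher), it is fixed given $(x,y)$. I therefore take the inner product with $a_g$ inside the expectation. This yields the displayed identity directly. The same step applies per example, so averaging over $\mathcal S$ gives the empirical version: the corrupted empirical criterion has mean equal to the attenuated clean criterion plus $\frac{\eta}{K-1}|\mathcal S|^{-1}\sum\mathbf 1^\top a_g(x)$.

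For the comparison claim, I would take two candidates $g,g'$ and subtract their expected corrupted criteria. The offset terms cancel exactly when $\mathbf 1^\top a_g(x)=\mathbf 1^\top a_{g'}(x)$, either pointwise or on average over the sample. What remains is $(1-\frac{K}{K-1}\eta)$ times the clean difference. Because this factor is positive for $\eta<(K-1)/K$, the expected ordering is also preserved in that range.

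Finally, I would verify the three examples by computing coordinate sums.
\begin{itemize}
\item For teacher contrasts $a_g=p_g-p_f$, the sum is $1-1=0$.
\item For the accuracy indicator $a_g=e_{\arg\max p_g}$, the sum is $1$. This reproduces the noisy-accuracy formula stated in Section~\ref{subsec:properties}.
\item For cross-entropy $a_g=-\log p_g$, the sum $-\sum_k\log p_g(k\mid x)$ varies with $g$. A two-class example with different $p_g$ suffices to show this, so the condition fails.
\end{itemize}

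There is no substantive obstacle in this argument. The only care point is being explicit that $a_g$ must not itself depend on the corrupted label, which holds for all three examples. I would also state that the result concerns expectations only and gives no bound on the variance of the corrupted estimate.
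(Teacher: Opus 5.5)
Your proposal is correct and is essentially the paper's proof: the paper likewise computes $\mathbb E[e_{\widetilde y}\mid y]=\bigl(1-\frac{K}{K-1}\eta\bigr)e_y+\frac{\eta}{K-1}\mathbf 1$ from the corruption law and takes the inner product with $a_g$, so the offset $\frac{\eta}{K-1}\mathbf 1^\top a_g$ cancels in candidate differences whenever the coordinate sum is candidate-invariant. Your explicit coordinate-sum checks for the three examples, and your caveat that $a_g$ must not depend on the corrupted label, spell out details the paper asserts without separate argument.
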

\noindent\emph{Proof.} Conditional on the clean class $y$, $\mathbb E[e_{\widetilde y}\mid y]
=(1-\frac{K}{K-1}\eta)e_y+\frac{\eta}{K-1}\mathbf 1$; taking the inner product with
$a_g$ gives the display. \hfill$\square$

Applying Proposition~\ref{prop:atten} to the alignment statistic, where
$a_g=p_g(x)-p_f(x)$ so that $\langle a_g,\mathbf 1\rangle=0$,
\[
\mathbb E\!\left[A_{\mathcal S}^{\mathrm{noisy}}\mid\mathcal S\right]
=\left(1-\tfrac{K}{K-1}\eta\right)A_{\mathcal S}^{\mathrm{lab}}
+A_{\mathcal S}^{\mathrm{teach}}
\]
The factor is positive whenever $\eta<(K-1)/K$; for $K=30$ this bound is
$29/30\approx0.967$, outside the range we evaluate, so within our experiments
the label signal shrinks by a known factor and never inverts. Both slopes are
verified empirically in Table~\ref{tab:attenuation}, to within $0.016$.

\paragraph{The identity is not specific to Brier alignment.}
The derivation uses only that $A_{\mathcal S}^{\mathrm{lab}}$ is linear in $e_y$ and that
the candidate--teacher difference is orthogonal to $\mathbf 1$. Any functional
linear in $e_y$ obeys the same attenuation, including empirical accuracy:
\[
\mathbb E[\widehat{\mathrm{Acc}}^{\mathrm{noisy}}(g)]
=\left(1-\tfrac{K}{K-1}\eta\right)\widehat{\mathrm{Acc}}(g)
+\tfrac{\eta}{K-1}
\]
affine in clean accuracy with the same positive slope, so noisy validation
accuracy preserves the clean accuracy ordering in expectation.
Table~\ref{tab:attenuation} verifies both slopes on the same runs.

\section{Experimental Protocol}
\label{supp:protocol}

\subsection{Runs, executions, and resampling units}
Four nested levels appear throughout, and Supplement Figure~\ref{fig:design} shows how
they relate. Table~\ref{tab:supp_runs} distinguishes independent candidate-family runs from
within-run repetitions and from whole-pipeline re-executions.

\begin{figure}[!tb]
\centering
\begin{tikzpicture}[font=\small,>=stealth,x=1cm,y=1cm]
\node[anchor=east,font=\small\bfseries] at (2.05,3.05) {base configuration};
\node[anchor=east,font=\footnotesize,black!60] at (2.05,2.76) {shift $\times$ seed};
\node[anchor=east,font=\small\bfseries] at (2.05,1.85) {execution};
\node[anchor=east,font=\footnotesize,black!60] at (2.05,1.56) {retrained teacher};
\node[anchor=east,font=\small\bfseries] at (2.05,0.72) {family-run};
\node[anchor=east,font=\footnotesize,black!60] at (2.05,0.43) {statistical unit};
\node[anchor=east,font=\small\bfseries] at (2.05,-0.42) {outcome};
\node[anchor=east,font=\footnotesize,black!60] at (2.05,-0.71) {repeated subsets};
\draw[rounded corners=1.5pt,black!45,fill=black!4] (2.35,2.62) rectangle (4.75,3.18);
\node[font=\footnotesize] at (3.55,2.90) {CNN: $15{+}15=30$};
\draw[rounded corners=1.5pt,black!45,fill=black!4] (5.05,2.62) rectangle (7.45,3.18);
\node[font=\footnotesize] at (6.25,2.90) {ViT: $17{+}27=44$};
\draw[->,black!55] (3.55,2.62) -- (3.55,2.02);
\draw[->,black!55] (6.25,2.62) -- (6.25,2.02);
\node[anchor=west,font=\footnotesize,black!60] at (3.62,2.32) {$\times\,3$};
\node[anchor=west,font=\footnotesize,black!60] at (6.32,2.32) {$\times\,1$};
\draw[rounded corners=1.5pt,black!45,fill=black!8] (2.35,1.46) rectangle (4.75,2.02);
\node[font=\footnotesize] at (3.55,1.74) {$30\times3=90$};
\draw[rounded corners=1.5pt,black!45,fill=black!8] (5.05,1.46) rectangle (7.45,2.02);
\node[font=\footnotesize] at (6.25,1.74) {$44\times1=44$};
\draw[->,black!55] (3.55,1.46) -- (4.35,0.90);
\draw[->,black!55] (6.25,1.46) -- (5.45,0.90);
\draw[rounded corners=1.5pt,black!45,fill=black!14] (2.35,0.34) rectangle (7.45,0.90);
\node[font=\footnotesize] at (4.90,0.62) {$90+44=134$ teacher--family pairs};
\draw[->,black!55] (4.90,0.34) -- (4.90,-0.22);
\draw[rounded corners=1.5pt,black!45,fill=black!4] (2.35,-0.78) rectangle (7.45,-0.22);
\node[font=\footnotesize] at (4.90,-0.50) {labeled subsets of size $n$};
\end{tikzpicture}
\caption{Levels of the experimental design. A base configuration is a shift--seed pair:
$30$ on the two CNN settings and $44$ on the two ViT settings, $74$ in all. Each CNN
configuration is executed three times with a retrained teacher on identical splits and
each ViT configuration once, so the number of independently trained teachers, and hence
of \emph{family-runs}, is $30\times3+44\times1=134$: $45$ per CNN architecture, $17$
on ViT--DomainNet and $27$ on ViT--CIFAR-20. The family-run is the statistical unit: subset outcomes
are averaged within it. For paired inference the three executions of a configuration
are averaged as well, leaving $15$ paired units per CNN architecture.}
\label{fig:design}
\end{figure}

\begin{table}[h]
\centering
\scriptsize
\setlength{\tabcolsep}{2pt}
\caption{Accounting of the experimental design. A run contains one teacher and
its fixed $72$-candidate family. Executions are independent end-to-end
repetitions with retrained teachers on identical splits. Results for the two CNN
settings pool all three executions, giving $45$ runs per architecture; the two ViT
settings have one execution each. $^{\dagger}$for $n<300$; the complete pool is
evaluated once at $n=300$. On the two ViT settings every clean budget is below
the pool size, so all of them use the full $25$ repetitions. The counts in this
table are those of the budget sweep; the budget-by-rate grid of the main text
draws ten repetitions per cell at every rate, including $\eta=0$.
The ViT--DomainNet selection pool varies by shift, from $2526$ to $6308$
examples; the largest supervised budget remains $n=150$.}
\label{tab:supp_runs}
\begin{tabular}{lcccccc}
\toprule
Setting & Cfg.\ & Ex.\ & Runs & Pool & Clean & Noise\ \\
\midrule
RN50--DN & 15 & 3 & 45 & $150{+}150$ & 25$^{\dagger}$ & 10 \\
MNV2--DN & 15 & 3 & 45 & $150{+}150$ & 25$^{\dagger}$ & 10 \\
ViT--DN & 17 & 1 & 17 & $2526$--$6308$ & 25 & 10 \\
ViT--C20 & 27 & 1 & 27 & $666{+}666$ & 25 & 10 \\
\midrule
Total & 74 & -- & 134 & -- & -- & -- \\
\bottomrule
\end{tabular}
\end{table}

\paragraph{Clarification of the main-text summary.}
The figure $3646$ quoted in Section~4 is the pool size for one ViT--DomainNet
shift; across the complete $17$-run cohort the selection-pool size varies from
$2526$ to $6308$. Each run uses its own complete pool, and the largest
supervised budget is fixed at $n=150$ throughout.

\paragraph{Candidate construction.}
Quantization is weight-only and simulated in floating point, so no integer
kernel is required. For a weight tensor $w$, bit width $b$, and clipping
percentile $q$, the scale is
\[
s=\frac{\operatorname{pct}_{q}(|w|)}{2^{b-1}-1}
\]
We use the symmetric signed range $[-(2^{b-1}-1),\,2^{b-1}-1]$, zero point zero,
and round-half-to-even. Weights are clipped to $\pm\operatorname{pct}_{q}(|w|)$
before scaling. Per-tensor quantization uses one scale for the complete tensor.
Per-channel quantization reduces over all axes except the last and uses one
scale per output channel. Only kernel weights are quantized; biases,
normalization parameters, and embeddings remain in floating point. When endpoint
skipping is enabled, the first and last quantizable layers also remain in
floating point.

Write the ordered arrays $B=(2,3,4,5,6,8)$, $Q=(99.0,99.5,100.0)$ and
$G=(\mathrm{tensor},\mathrm{channel})$, and let
\[
\begin{aligned}
\mathcal B&=\{2,3,4,5,6,8\},&
\mathcal Q&=\{99.0,99.5,100.0\},\\
\mathcal G&=\{\mathrm{tensor},\mathrm{channel}\},&
\mathcal E&=\{0,1\}
\end{aligned}
\]
The candidate family is $\mathcal B\times\mathcal Q\times\mathcal G\times
\mathcal E$, giving $6\times3\times2\times2=72$ candidates per teacher. The
endpoint indicator varies fastest, followed by granularity, clipping percentile,
and bit width. For zero-based index $i$,
\[
\begin{aligned}
b_i &= B_{\lfloor i/12\rfloor},&
q_i &= Q_{\lfloor i/4\rfloor\bmod 3},\\
g_i &= G_{\lfloor i/2\rfloor\bmod 2},&
e_i &= i\bmod 2
\end{aligned}
\]
Here, $e_i=1$ means that the first and last quantizable layers remain in
floating point.

The selection pool is the union of the quantization-calibration split and the
selection split. Because clipping percentiles are computed from the weight
tensors rather than from data, the calibration split plays no role in
constructing the candidate family, and its inclusion in the selection pool does
not leak candidate-construction information. We report the composition
explicitly because the two halves are not interchangeable: on a $150$-example
half-pool the distortion ordering between the two $8$-bit per-channel unclipped
candidates can reverse.

Test splits contain $900$ examples per run for the DomainNet CNNs, $1824$ for
ViT--DomainNet and $668$ for ViT--CIFAR-20. All test sets are fixed, disjoint
from the selection pools, and used only after candidate selection.

\paragraph{Teacher training.}
CNN teachers: transfer learning from the framework's ImageNet weights and its own
preprocessing, five epochs of head training at learning rate $10^{-3}$ with the
backbone frozen, then three unfrozen at $10^{-5}$, both with Adam and batch size
$64$. ViT teachers: continual distillation over a
\texttt{vit\_base\_patch16\_224} backbone \citep{dosovitskiy2021image} at
$224\times224$, Adam at $10^{-4}$, batch size $64$, distillation temperatures
$2.0$ and $10.0$. Seeds $0$, $1$ and
$2$ index the repetitions, each set once per run for the Python, NumPy and
framework generators.

\paragraph{Estimator variants.}
SoftmaxCorr and COT are evaluated with a uniform class prior, which the balanced
target splits permit; this is the variant reported throughout.

\paragraph{Calibration of the label-free estimators.}
Two of the nine need a calibration set: ATC learns a threshold and DOC a reference
accuracy, both defined on labeled data in their original form. Our pipelines retain no
source-domain predictions, so we calibrate them on the target calibration split using
the teacher's arg-max as the reference label, which keeps them label-free and uses the
reference this setting already provides. One further estimator is outside
the comparison for a structural reason: \citet{li2026frap} requires an external
foundation model as its reference, which changes the resource assumption of this
setting, where the teacher is the reference.

\subsection{Selector definitions and coefficient selection}
Table~\ref{tab:supp_selectors} lists all selectors.

\paragraph{Implementation convention.}
In all principal experiments $D_{\mathcal U}^{\mathrm{KL}}$ is computed on the
complete selection-input pool and remains fixed across label budgets and
cross-validation folds; only the supervised component is evaluated on
$\mathcal S$. The same-$\mathcal S$ construction appears only in the quadratic
identity of Proposition~\ref{prop:quad} and in the anchor-placement sensitivity
analysis of Supplement~C. No candidate is retrained or adapted at any point, and
the held-out test set is never touched. Five
folds are used when $n\ge 25$ and leave-one-out below; folds are contiguous
blocks of the labeled sample in its stored order and become singletons when
$n<F$. Every draw that varies within a run is seeded deterministically from the
quantity that indexes it, so a replication recovers the same labeled subsets and
the same corruption realizations without storing them: the subset of size $n$ at
repetition $r$ uses seed $10^{5}+7919n+r$, and the corruption realization at
rate $\eta$ and repetition $r$ uses seed $9500+\lfloor 100\eta\rceil+r$, each
initializing a PCG64 generator. Write $\mathcal S=\bigcup_{k=1}^{F}\mathcal S_k$ for the fold partition and
$\mathcal S_{-k}$ for its complement. For a coefficient $c$ in the grid
$\mathcal C$, let $\hat g_{c,-k}$ be the candidate the selector returns when its
supervised term is averaged over $\mathcal S_{-k}$ only, the distortion term being
unchanged because it uses no labels. The retained coefficient is
\[
\hat c=\arg\min_{c\in\mathcal C}\;
\frac{1}{F}\sum_{k=1}^{F} L^{\mathrm{cv}}\!\left(\hat g_{c,-k},\mathcal S_k\right)
\]
with exact ties resolved by the first value in the ordered grid, and the reported
candidate is the one the selector returns on all of $\mathcal S$ at $\hat c$.
Algorithm~\ref{alg:anchored} states the whole procedure.

\begin{algorithm}[!tb]
\caption{Anchored selection. The anchor is evaluated once on the complete pool
and reused at every budget and fold; only the supervised term sees
$\mathcal S$. Written for an added supervised term, as in CE-combo; the
directional variants replace $c\,\phi$ by $-c\,a$ and are otherwise identical.}
\label{alg:anchored}
\footnotesize
\begin{algorithmic}[1]
\REQUIRE teacher $f$, candidate family $\mathcal G$, unlabeled pool $\mathcal U$,
labeled sample $\mathcal S$ of size $n$, coefficient grid $\mathcal C$ with
$0\in\mathcal C$, fold count $F$
\ENSURE selected candidate $\hat g$
\STATE $D(g)\leftarrow \frac{1}{|\mathcal U|}\sum_{x\in\mathcal U}
       \mathrm{KL}\!\left(p_f(\cdot\mid x)\,\|\,p_g(\cdot\mid x)\right)$
       for every $g\in\mathcal G$ \COMMENT{once per run; no labels}
\STATE $\phi(g,x,y)\leftarrow$ per-example supervised statistic
\STATE partition $\mathcal S$ into contiguous folds
       $\mathcal S_1,\dots,\mathcal S_F$; $F=5$ if $n\ge 25$, else
       $F=\min(n,10)$, which is leave-one-out at every budget we evaluate
\FOR{$c \in \mathcal C$ in increasing order}
  \STATE $\ell(c)\leftarrow 0$
  \FOR{$k = 1$ \TO $F$}
    \STATE $\hat g_{c,-k}\leftarrow \arg\min_{g\in\mathcal G}\;
           D(g)+c\,\frac{1}{|\mathcal S_{-k}|}\sum_{(x,y)\in\mathcal S_{-k}}\phi(g,x,y)$
    \STATE $\ell(c)\leftarrow \ell(c)+\frac{1}{|\mathcal S_k|}
           \sum_{(x,y)\in\mathcal S_k}-\log p_{\hat g_{c,-k}}(y\mid x)$
  \ENDFOR
\ENDFOR
\STATE $\hat c\leftarrow \arg\min_{c\in\mathcal C}\ell(c)$, ties to the first grid value
\STATE \textbf{return} $\hat g \leftarrow \arg\min_{g\in\mathcal G}\;
       D(g)+\hat c\,\frac{1}{n}\sum_{(x,y)\in\mathcal S}\phi(g,x,y)$
\end{algorithmic}
\end{algorithm}

 The
held-out loss $L^{\mathrm{cv}}$ is the selector's own supervised criterion:
empirical cross-entropy for CE-combo, and cross-entropy for alignment and for the
permutation control, whose directional statistic is a score component rather than a
loss. Under corruption the labels in
both $\mathcal S_{-k}$ and $\mathcal S_k$ are the corrupted ones, so no selector
sees a clean label at any stage. The
anchored-term grid is $\alpha\in\{0,0.05,0.1,0.25,0.5,1,2,4,8,16,32,64\}$ ($12$
values) and the alignment grid is $\lambda\in\{0,0.5,\ldots,10\}$ ($21$ values).
Both contain zero, so an anchored selector can revert to pure distortion; because
$\alpha$ multiplies the supervised term, large $\alpha$ approaches direct
validation in the limit. On the finite grid the largest value is selected in
$3.3\%$, $2.7\%$ and $10.0\%$ of outcomes at ten, fifty and all labels, and the selector
at $\alpha_{\max}=64$ returns the direct-validation candidate in $98.0\%$, $95.3\%$ and
$100\%$ of them, so the grid spans the interpolation to that resolution.
Every mechanism variant selects its own coefficient through the identical
procedure, so differences between them are not attributable to tuning freedom.

\begin{table*}[!tb]
\centering
\scriptsize
\setlength{\tabcolsep}{4pt}
\caption{Selectors and ablations. All supervised entries receive the same
labeled sample $\mathcal S$ of the stated size, including labels consumed by
coefficient selection.}
\label{tab:supp_selectors}
\begin{tabular}{lll}
\toprule
Selector & Score minimized over $g\in\mathcal G$ & Role \\
\midrule
Distortion & $D_{\mathcal U}^{\mathrm{KL}}(g,f)$
& label-free teacher-relative baseline \\
Highest precision & none (fixed configuration)
& data-free baseline matching the distortion pick \\
Validation CE & $\widehat{\mathrm{CE}}_{\mathcal S}(g)$
& direct supervised reference \\
CE-combo & $D_{\mathcal U}^{\mathrm{KL}}(g,f)
+\alpha\widehat{\mathrm{CE}}_{\mathcal S}(g)$ & anchored cross-entropy \\
Alignment & $D_{\mathcal U}^{\mathrm{KL}}(g,f)
-\lambda A_{\mathcal S}^{\mathrm B}(g,f)$
& teacher-relative directional selector \\
Teacher component & $D_{\mathcal U}^{\mathrm{KL}}(g,f)-\lambda A_{\mathcal S}^{\mathrm{teach}}(g,f)$
& mechanism ablation, no observed class \\
Permutation control & $D_{\mathcal U}^{\mathrm{KL}}(g,f)
-\lambda A_{\mathcal S}^{\pi}(g,f)$
& destroys residual-direction--input correspondence \\
\bottomrule
\end{tabular}
\end{table*}

Supplement Table~\ref{tab:settings} lists the four settings and their sizes.

\begin{table}[h]
\centering
\scriptsize
\setlength{\tabcolsep}{2.2pt}
\caption{Experimental settings. A run contains one fixed teacher and its
$72$-candidate family. The selection pool is the union of the
quantization-calibration and selection splits; the largest label budget is listed
separately because it is not the pool size. Exec.\ counts independent end-to-end
repetitions of the whole pipeline with retrained teachers on identical splits.}
\label{tab:settings}
\begin{tabular}{llcccccc}
\toprule
Data & Teacher & Cls.\ & Cfg.\ & Ex.\ & Runs & Pool & Max.\ $n$ \\
\midrule
DomainNet & ResNet50 & 30 & 15 & 3 & 45 & 300 & 300 \\
DomainNet & MNV2 & 30 & 15 & 3 & 45 & 300 & 300 \\
DomainNet & ViT & 50 & 17 & 1 & 17 & 2.5k--6.3k & 150 \\
CIFAR-20 & ViT & 20 & 27 & 1 & 27 & 1332 & 200 \\
\midrule
Total & -- & -- & 74 & -- & 134 & -- & -- \\
\bottomrule
\end{tabular}
\end{table}

\paragraph{How the medians of the main-text tables are taken.}
The medians reported for the three selectors across budgets are computed after
averaging the repeated labeled subsets within each family-run: they are medians
of run means rather than medians of the pooled run--subset outcomes. The two
differ because the median does not commute with the mean, and the pooled median
is lower at the smallest budgets. The run-level convention is the one stated in
Supplement~B and the one the paired inference uses.

\paragraph{When the anchor is already optimal.}
The label-free pick is not uniformly suboptimal. Read run by run, the candidate
that minimizes teacher distortion attains zero cross-entropy regret in $15$ of
the $134$ families, ten of them on MobileNetV2, three on ResNet50 and two on
ViT--DomainNet, and is non-oracle in the remaining $119$. The aggregate regrets
reported elsewhere are means over those two groups, so the label-free rule is
best read as usually, but not always, leaving regret on the table.

\paragraph{Inferential unit.}
Figure~\ref{fig:design} summarizes the design. Unless
a caption explicitly identifies a pooled run--subset or run--realization
distribution, descriptive means and medians are computed after averaging
repeated outcomes within each family-run. Tables reporting upper tails or
operational outcome distributions pool the repetitions, as their captions state.
Formal inference never treats subsets, candidates or corruption realizations as
independent units. Descriptive
quantities, all means and medians reported for the CNN settings, are computed over
the $45$ teacher--family runs, that is five shifts by three seeds by three
executions, and over $17$ runs on ViT--DomainNet and $27$ on ViT--CIFAR-20.
Paired inference is
computed after averaging the three executions within each shift--seed configuration,
which leaves $15$ paired units per CNN architecture, because the three executions
share splits and are not independent replicates of the same comparison. Neither
level counts candidates or resample outcomes as units.

\paragraph{Where the between-run variation comes from.}
The standard deviations reported in Table~\ref{tab:sota} are large relative to their
means, and a variance
decomposition shows why. Taking CE-combo at ten labels and grouping the $45$ runs by
target shift, the between-shift component accounts for $90\%$ of the total variance
on ResNet50 and $74\%$ on MobileNetV2. The dispersion is therefore mostly
heterogeneity across domain shifts, a property of the benchmark, rather than
instability of the selector on a given shift.

\paragraph{Clustering.}
The $15$ base configurations of each CNN setting are five shifts crossed with three seeds, and
three of the five shifts share Sketch as target domain, so the runs are not
fully exchangeable. Run-level bootstrap intervals should be read as having
approximately five effective clusters per architecture, and the exact
sign-flip test on $15$ units has limited power. We therefore treat per-setting
non-significance as inconclusive rather than as evidence of no effect, and treat the
per-setting values as the primary evidence. Supplement Table~\ref{tab:paired} reports
the resulting paired comparisons between supervised selectors.

\paragraph{What each pipeline stores.}
Every pipeline stores predictive distributions for the teacher and each of its
candidates on all three splits. Every estimator we screen is defined on those
distributions and is evaluated on all four settings.

\paragraph{Computing infrastructure.}
Experiments were run in Google Colab notebooks on Python $3.12.13$ under Ubuntu
$22.04.5$ LTS, kernel $6.6.122$, with results written to Google Drive. Colab
allocates the host per session, so the runtime is a class rather than a fixed
machine: the sessions used $2$ to $8$ vCPUs of an Intel Xeon at
$2.0$--$2.2$\,GHz with $12$ to $50$\,GB of system memory, and an NVIDIA
accelerator, observed as A100-SXM4 with $80$\,GB and Tesla T4 with $15$\,GB,
driver $580.82.07$ and CUDA toolkit $12.8$. The two DomainNet CNN pipelines use
TensorFlow $2.20.0$, with teachers initialized from ImageNet weights supplied by
the framework's application module; the two ViT pipelines use PyTorch $2.11.0$
with torchvision $0.26.0$ and timm $1.0.28$. Quantization is simulated in floating
point and applies to weights only, so no integer kernel or quantization-specific
runtime is required. The selection and analysis stage consumes only the stored
probability matrices and runs on CPU with NumPy $2.0.2$ and SciPy $1.16.3$: the
principal selector tables recompute from the archived outcome files and individual
runs from the stored predictions, both without an accelerator, while the
full-cohort diagnostics and the deployment-cost analysis need the complete cache
collection or architecture-level metadata.

\section{Calibration and Candidate-Family Sensitivity}
\label{supp:diagnostics}

On ResNet50, validation CE selects candidates with substantially lower ECE and
cross-entropy but lower top-$1$ accuracy; on MobileNetV2 it reduces both while
producing a small positive accuracy difference. This explains why validation
accuracy fails as a corruption-robust criterion on ResNet50 specifically: the
quantity that must be recovered there is invisible to a $0$--$1$ loss.

Distortion remains poor on ResNet50 after restricting the family to
high-precision candidates or to the top $20$ by distortion, and its median
regret rises to $0.471$ inside the low-precision family, so the selection
difficulty is not an artifact of a single catastrophic low-bit candidate. The
comparison does not show uniform dominance of any anchored method across settings
and budgets.



\begin{table}[!tb]
\centering\scriptsize\setlength{\tabcolsep}{3pt}
\caption{Paired run-level inference. Differences are computed after averaging
within-run outcomes; negative favors the first method. $p$ is the two-sided sign-flip value, obtained by complete enumeration of the
sign assignments when the number of paired units is at most $22$ and by a
$200{,}000$-draw Monte Carlo sign-flip test otherwise, Holm-adjusted across the
comparisons shown in each row, which is the family; boldface marks $p<0.05$ after adjustment. Upper block: clean labels.
Lower block: symmetric corruption.}
\label{tab:paired}
\begin{tabular}{llrlrlrl}
\toprule
& & \multicolumn{2}{c}{CE-cmb $-$ Val-CE} & \multicolumn{2}{c}{Align $-$ Val-CE}
& \multicolumn{2}{c}{Align $-$ CE-cmb} \\
\cmidrule(lr){3-4}\cmidrule(lr){5-6}\cmidrule(lr){7-8}
Setting & $n$ & Diff. & $p$ & Diff. & $p$ & Diff. & $p$ \\
\midrule
RN50--DN & 10 & -0.016 & 1.000 & -0.014 & 1.000 & +0.001 & 1.000 \\
 & 25 & +0.003 & 0.776 & +0.020 & 0.469 & +0.016 & 0.077 \\
 & 50 & +0.008 & 0.393 & +0.026 & 0.112 & +0.018 & 0.063 \\
 & 100 & +0.004 & 0.485 & +0.020 & 0.167 & +0.016 & \textbf{0.029} \\
 & 150 & -0.001 & 0.899 & +0.012 & 0.324 & +0.012 & 0.324 \\
 & 300 & -0.006 & 0.879 & -0.001 & 0.879 & +0.005 & 0.879 \\
\midrule
MNV2--DN & 10 & -0.124 & $\mathbf{<}$\textbf{0.001} & -0.139 & $\mathbf{<}$\textbf{0.001} & -0.015 & 0.111 \\
 & 25 & -0.039 & $\mathbf{<}$\textbf{0.001} & -0.037 & $\mathbf{<}$\textbf{0.001} & +0.002 & 0.448 \\
 & 50 & -0.012 & 0.158 & -0.009 & 0.188 & +0.004 & 0.215 \\
 & 100 & -0.001 & 1.000 & -0.000 & 1.000 & +0.001 & 1.000 \\
 & 150 & +0.001 & 1.000 & +0.000 & 1.000 & -0.001 & 1.000 \\
 & 300 & +0.004 & 1.000 & +0.003 & 1.000 & -0.000 & 0.938 \\
\midrule
Setting & $\eta$ & Diff. & $p$ & Diff. & $p$ & & \\
\midrule
RN50--DN & 20\% & $-0.010$ & $\mathbf{<}$\textbf{0.001} & $-0.011$ & 0.102 & & \\
 & 40\% & $-0.032$ & $\mathbf{<}$\textbf{0.001} & $-0.051$ & $\mathbf{<}$\textbf{0.001} & & \\
MNV2--DN & 20\% & $-0.076$ & \textbf{0.013} & $-0.166$ & \textbf{0.013} & & \\
 & 40\% & $-0.061$ & \textbf{0.012} & $-0.431$ & \textbf{0.002} & & \\
ViT--DN & 20\% & $-0.043$ & 0.248 & $-0.021$ & 0.768 & & \\
 & 40\% & $-0.016$ & 0.123 & $-0.295$ & \textbf{0.036} & & \\
ViT--C20 & 20\% & $-0.069$ & \textbf{0.006} & $-0.000$ & 0.913 & & \\
 & 40\% & $+0.000$ & 1.000 & $+0.000$ & 1.000 & & \\
\bottomrule
\end{tabular}
\end{table}

\subsection{Anchor placement and anchor choice}
\label{supp:anchorvar}
Two design questions can be answered by re-running the same protocol with one
element changed, and Supplement Table~\ref{tab:anchorvar} does so. Evaluating the teacher
distortion on the complete unlabeled pool, on the labeled subsample, or on an
independent subsample of equal size gives regrets that differ by at most $0.007$,
well inside the run-to-run standard deviation, so nothing in our conclusions turns
on that choice even though the operational argument favours using all available
inputs. Replacing the teacher anchor with an optimal-transport anchor is more
informative: it improves on the teacher on MobileNetV2, from $0.107$ to $0.085$ at
ten labels, and worsens it on ResNet50, from $0.163$ to $0.189$. Anchoring is
therefore a general construction rather than a property of the teacher, and each
anchor helps where it is itself the stronger label-free rule; the teacher's
attraction is that it is immune to the confidence-collapse mode that defeats the
confidence-based family.

\begin{table}[!tb]
\centering\scriptsize\setlength{\tabcolsep}{5pt}
\caption{Accuracy regret, the quantity several of the label-free estimators were
proposed to rank, over all three executions and $45$ runs per architecture.
Entries are the gap in target top-$1$ accuracy between the best candidate and the one
selected. The deployment objective in this paper is cross-entropy, so these estimators
are elsewhere evaluated outside their original objective; on their own metric the
confidence-based group still fails by two orders of magnitude. The anchored
selectors spend labels, so they are outside this label-free comparison.}
\label{tab:accreg}
\begin{tabular}{@{}lcc@{}}
\toprule
Estimator & RN50--DN & MNV2--DN \\
\midrule
Distortion & 0.0032 & 0.0081 \\
SoftmaxCorr & 0.0092 & 0.0085 \\
Nuclear norm & 0.0090 & 0.0031 \\
DOC & 0.0033 & 0.0076 \\
ATC-MC & 0.3211 & 0.2717 \\
ATC-NE & 0.3211 & 0.2404 \\
Entropy & 0.3178 & 0.3101 \\
Average confidence & 0.3178 & 0.3101 \\
\bottomrule
\end{tabular}
\end{table}

\subsection{Clipping and metric sensitivity}
\label{supp:sens}
Supplement Table~\ref{tab:sens} reports two checks. Varying the probability floor across
three values spanning six orders of magnitude moves distortion's regret by at most $0.0003$, so the large
regret magnitudes are a property of the candidate families and not of the floor.
Supplement Table~\ref{tab:accreg} evaluates the label-free screen on accuracy
regret rather than cross-entropy regret, addressing the objection that several
estimators were designed to rank accuracy: the
anchor incurs $0.003$ and $0.008$ in accuracy regret while average confidence incurs
$0.318$ and $0.310$, so the failure is not an artifact of scoring them on an
objective they were not built for.

\begin{table}[!tb]
\centering\scriptsize\setlength{\tabcolsep}{3pt}
\caption{Where the anchor is evaluated, and which anchor. Mean regret over the first complete execution of each CNN setting, $15$ runs per
architecture, so the values are not directly comparable with the pooled means of
Supplement Table~\ref{tab:sweep}. Columns two to four vary only the input set on which
the teacher distortion is averaged: the complete unlabeled pool, the labeled subsample itself, or an
independent subsample of the same size. The last column replaces the teacher anchor
with the optimal-transport estimator, rescaled so that the coefficient grid remains
comparable.}
\label{tab:anchorvar}
\begin{tabular}{@{}llccccc@{}}
\toprule
Setting & $n$ & Val-CE & All $\mathcal U$ & Same $\mathcal S$ & Indep. & COT anchor \\
\midrule
ResNet50--DomainNet & 10 & 0.1801 & 0.1627 & 0.1564 & 0.1629 & 0.1888 \\
 & 50 & 0.0608 & 0.0677 & 0.0682 & 0.0672 & 0.0757 \\
 & 300 & 0.0129 & 0.0086 & 0.0086 & 0.0086 & 0.0094 \\
\midrule
MobileNetV2--DomainNet & 10 & 0.2473 & 0.1065 & 0.1080 & 0.1101 & 0.0847 \\
 & 50 & 0.0504 & 0.0231 & 0.0227 & 0.0219 & 0.0119 \\
 & 300 & 0.0010 & 0.0010 & 0.0010 & 0.0010 & 0.0009 \\
\bottomrule
\end{tabular}
\end{table}

Supplement Table~\ref{tab:sens} collects both checks.

\begin{table}[!tb]
\centering\scriptsize\setlength{\tabcolsep}{4pt}
\caption{Two sensitivity checks. Left: distortion's cross-entropy regret under three
probability floors, showing that the magnitudes do not depend on clipping. Right:
regret in \emph{accuracy}, the quantity the confidence estimators were designed to
rank, for the anchor and for average confidence. The confidence estimator fails on
its own metric as well.}
\label{tab:sens}
\begin{tabular}{@{}lccc cc@{}}
\toprule
& \multicolumn{3}{c}{CE regret at floor} & \multicolumn{2}{c}{Accuracy regret} \\
\cmidrule(lr){2-4}\cmidrule(lr){5-6}
Setting & $10^{-6}$ & $10^{-8}$ & $10^{-12}$ & Dist.\ & Avg.\ conf.\ \\
\midrule
ResNet50--DomainNet & 0.2158 & 0.2161 & 0.2161 & 0.0040 & 0.3212 \\
MobileNetV2--DomainNet & 0.0848 & 0.0848 & 0.0848 & 0.0076 & 0.3076 \\
\bottomrule
\end{tabular}
\end{table}

\section{Additional Experimental Results}

\paragraph{Budget sweep and upper tail.}
Supplement Table~\ref{tab:sweep} gives every clean-label budget we evaluate. Reading down
a setting shows the pattern the main text summarizes: anchoring wins at ten and twenty-five labels and loses from fifty onward, which is
where the eleven-of-twenty-three count in the main text comes from. Supplement Table~\ref{tab:tail} gives the corresponding upper tail. The
$95$th percentile and the catastrophic-selection rate move together with the mean at
the smallest budgets and separate at the largest, where anchoring can leave the mean
unchanged while still removing the worst outcomes.
\begin{table}[!tb]
\centering
\scriptsize
\setlength{\tabcolsep}{4pt}
\caption{Budget ablation: mean target cross-entropy regret at every clean-label
budget evaluated. Distortion is label-free and constant in $n$. Boldface marks the
lowest of the three supervised selectors in each row. Anchoring wins at the smallest budget in all four settings
and in seven of the remaining nineteen comparisons; its advantage concentrates at ten
and twenty-five labels and fades from fifty onward. Runs are $45$ per CNN
architecture, pooling three executions, $17$ on ViT--DomainNet and $27$ on
ViT--CIFAR-20.}
\label{tab:sweep}
\begin{tabular}{@{}llcccc@{}}
\toprule
Setting & $n$ & Dist.\ & Val-CE & CE-combo & Align \\
\midrule
ResNet50--DomainNet & 10 & 0.2198 & 0.1819 & \textbf{0.1664} & 0.1678 \\
 & 25 & 0.2198 & \textbf{0.1015} & 0.1047 & 0.1211 \\
 & 50 & 0.2198 & \textbf{0.0619} & 0.0702 & 0.0882 \\
 & 100 & 0.2198 & \textbf{0.0334} & 0.0372 & 0.0532 \\
 & 150 & 0.2198 & 0.0282 & \textbf{0.0276} & 0.0398 \\
 & 300 & 0.2198 & 0.0132 & \textbf{0.0071} & 0.0121 \\
\midrule
MobileNetV2--DomainNet & 10 & 0.0873 & 0.2270 & 0.1026 & \textbf{0.0875} \\
 & 25 & 0.0873 & 0.0828 & \textbf{0.0440} & 0.0462 \\
 & 50 & 0.0873 & 0.0350 & \textbf{0.0227} & 0.0265 \\
 & 100 & 0.0873 & 0.0130 & \textbf{0.0116} & 0.0129 \\
 & 150 & 0.0873 & \textbf{0.0075} & 0.0089 & 0.0078 \\
 & 300 & 0.0873 & \textbf{0.0012} & 0.0049 & 0.0045 \\
\midrule
ViT--DomainNet & 10 & 0.2843 & 0.2078 & \textbf{0.1395} & 0.2046 \\
 & 25 & 0.2843 & 0.0912 & \textbf{0.0822} & 0.1273 \\
 & 50 & 0.2843 & \textbf{0.0593} & 0.0634 & 0.0860 \\
 & 100 & 0.2843 & \textbf{0.0287} & 0.0343 & 0.0709 \\
 & 150 & 0.2843 & \textbf{0.0165} & 0.0215 & 0.0511 \\
\midrule
ViT--CIFAR-20 & 10 & 0.0955 & 0.2215 & \textbf{0.1155} & 0.1496 \\
 & 25 & 0.0955 & 0.0998 & \textbf{0.0880} & 0.1139 \\
 & 50 & 0.0955 & \textbf{0.0541} & 0.0636 & 0.0821 \\
 & 100 & 0.0955 & \textbf{0.0373} & 0.0550 & 0.0693 \\
 & 150 & 0.0955 & \textbf{0.0270} & 0.0458 & 0.0599 \\
 & 200 & 0.0955 & \textbf{0.0260} & 0.0443 & 0.0517 \\
\bottomrule
\end{tabular}
\end{table}

\begin{table*}[!tb]
\centering
\scriptsize
\setlength{\tabcolsep}{3pt}
\caption{Upper-tail behavior across clean-label budgets, pooling all three
end-to-end executions, $45$ runs per architecture. Left block: $95$th percentile of
target cross-entropy regret pooled over run--subset outcomes rather than
over run means, since the tail is a property of the outcome distribution. Right block: the
catastrophic-selection rate $\Pr(R>0.1)$. The last column is the complete labeled
pool of $300$.}
\label{tab:tail}
\begin{tabular}{llcccccc c cccccc}
\toprule
& & \multicolumn{6}{c}{$95$th-percentile regret} & & \multicolumn{6}{c}{$\Pr(R>0.1)$} \\
\cmidrule(lr){3-8}\cmidrule(lr){10-15}
Setting & Selector & 10 & 25 & 50 & 100 & 150 & 300 & & 10 & 25 & 50 & 100 & 150 & 300 \\
\midrule
RN50--DN & Val-CE & 0.58 & 0.35 & 0.24 & 0.14 & 0.12 & 0.08 & 0.50 & 0.33 & 0.17 & 0.07 & 0.06 & 0.02 \\
 & CE-combo & 0.49 & 0.37 & 0.33 & 0.20 & 0.13 & 0.03 & 0.49 & 0.32 & 0.19 & 0.08 & 0.06 & 0.00 \\
 & Align & 0.48 & 0.41 & 0.34 & 0.28 & 0.18 & 0.05 & 0.50 & 0.36 & 0.27 & 0.13 & 0.09 & 0.00 \\
\midrule
MNV2--DN & Val-CE & 1.18 & 0.61 & 0.13 & 0.06 & 0.04 & 0.00 & 0.30 & 0.15 & 0.06 & 0.02 & 0.01 & 0.00 \\
 & CE-combo & 0.71 & 0.20 & 0.12 & 0.08 & 0.06 & 0.02 & 0.23 & 0.13 & 0.06 & 0.02 & 0.02 & 0.00 \\
 & Align & 0.44 & 0.20 & 0.13 & 0.09 & 0.06 & 0.01 & 0.25 & 0.14 & 0.08 & 0.03 & 0.01 & 0.00 \\
\bottomrule
\end{tabular}
\end{table*}

\paragraph{Paired inference and coefficient stability.}
Supplement Table~\ref{tab:paired} reports the paired run-level tests. Under clean
supervision the selected coefficient is generally away from the upper boundary; under
$40\%$ corruption on MobileNetV2 the grid binds frequently, reaching
$\lambda_{\max}$ in $53\%$ of runs; Supplement Table~\ref{tab:lambda} gives the
distribution of the coefficient and Supplement Figure~\ref{fig:lambda} the regret
curve behind it. On ViT--CIFAR-20 at $\eta=0.4$ the paired
differences are exactly zero because the three supervised selectors select the same
candidate in every run at that corruption level, so the comparison carries no
information there.
\begin{table*}[!tb]
\centering\scriptsize\setlength{\tabcolsep}{4pt}
\caption{Distribution of the cross-validated alignment coefficient at the
complete labeled pool. $\Pr(\lambda{=}0)$ is the frequency with which the procedure
declines to use the supervised direction and reverts to pure distortion;
$\Pr(\lambda{=}\lambda_{\max})$ the frequency of selecting the grid boundary. SD is
across run medians.}
\label{tab:lambda}
\begin{tabular}{llcccc c}
\toprule
Setting & Supervision & Median $\lambda$ & IQR & $\Pr(\lambda{=}0)$
& $\Pr(\lambda{=}\lambda_{\max})$ & SD \\
\midrule
ResNet50--DomainNet & Clean & 7.5 & [5.5, 9.0] & 0.111 & 0.156 & 3.12 \\
 & 20\% noise & 9.0 & [8.5, 9.5] & 0.000 & 0.244 & 1.28 \\
 & 40\% noise & 8.5 & [7.5, 9.0] & 0.000 & 0.067 & 1.01 \\
\midrule
MobileNetV2--DomainNet & Clean & 3.5 & [0.0, 5.5] & 0.289 & 0.022 & 2.81 \\
 & 20\% noise & 4.0 & [0.0, 5.5] & 0.289 & 0.000 & 2.91 \\
 & 40\% noise & 10.0 & [8.0, 10.0] & 0.133 & 0.533 & 3.41 \\
\midrule
ViT--DomainNet & Clean & 5.5 & $[2.0,6.0]$ & 0.059 & 0.000 & 2.63 \\
 & 20\% noise & 7.5 & $[5.5,8.0]$ & 0.000 & 0.059 & 3.23 \\
 & 40\% noise & 7.0 & $[6.0,8.0]$ & 0.000 & 0.059 & 1.32 \\
\midrule
ViT--CIFAR-20 & Clean & 6.5 & $[5.8,7.5]$ & 0.000 & 0.000 & 1.60 \\
 & 20\% noise & 6.5 & $[5.0,8.0]$ & 0.000 & 0.000 & 1.50 \\
 & 40\% noise & 7.5 & $[6.8,7.5]$ & 0.000 & 0.000 & 0.64 \\
\bottomrule
\end{tabular}
\end{table*}

\begin{figure}[h]
\centering
\includegraphics[width=0.98\columnwidth]{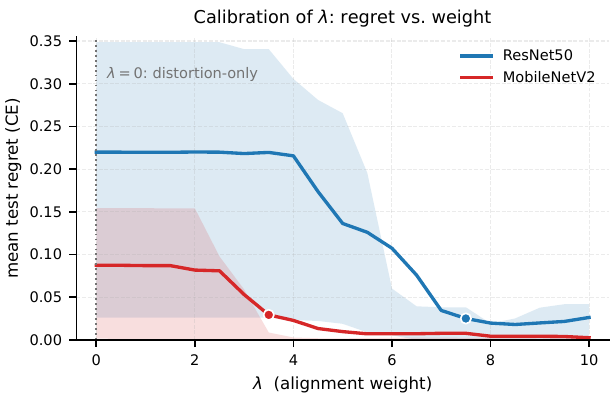}
\caption{Mean target cross-entropy regret as a function of the alignment weight
$\lambda$ on the two DomainNet CNN settings. $\lambda = 0$ recovers
distortion-only selection. Mean regret decreases with $\lambda$ on both
settings, steeply on ResNet50 between roughly $\lambda=4$ and $\lambda=8$ and more
gradually thereafter; the shaded bands are interquartile ranges across runs. Marked
points are the medians of the cross-validated coefficient.}
\label{fig:lambda}
\end{figure}

\paragraph{Corruption at the complete pool.}
Supplement Table~\ref{tab:noise} isolates corruption from scarcity by fixing the budget at
the complete labeled pool. It evaluates the same $n=300$ conditions as the budget--rate
grid but pools all three executions, so its cohort is larger. It reports medians
alongside means and shows most clearly that at $\eta=0.4$ the directional variant
attains the lowest median on both architectures.
\begin{table}[!tb]
\centering
\scriptsize
\setlength{\tabcolsep}{3pt}
\caption{Symmetric label corruption at the complete labeled pool, pooling all three
end-to-end executions, $45$ runs per architecture: median target cross-entropy
regret with mean in parentheses taken over runs after averaging
corruption realizations within each run. Boldface marks the
lowest median in each row.}
\label{tab:noise}
\begin{tabular}{@{}llccc@{}}
\toprule
Setting & $\eta$ & Val-CE & CE-combo & Align \\
\midrule
RN50--DN & 20\% & \textbf{0.064 (0.078)} & 0.064 (0.068) & 0.065 (0.067) \\
 & 40\% & 0.165 (0.231) & 0.127 (0.200) & \textbf{0.102 (0.180)} \\
\midrule
MNV2--DN & 20\% & 0.009 (0.181) & 0.013 (0.105) & \textbf{0.001 (0.015)} \\
 & 40\% & 0.779 (0.831) & 0.706 (0.770) & \textbf{0.393 (0.399)} \\
\bottomrule
\end{tabular}
\end{table}

\paragraph{The directional term and its control.}
Supplement Table~\ref{tab:control} reports the permutation control and the two components
of the alignment statistic. The control's column is the one to read against
distortion: it tracks the anchor to within $0.025$ everywhere, which is what makes the
comparison informative only where reverting to the anchor is costly. The teacher
component is the best of the three variants on ResNet50 at ten labels and the worst
elsewhere at that budget.

\paragraph{Convention of the permutation control.}
The implemented control, stated in Table~\ref{tab:supp_notation} and in the caption of
Table~\ref{tab:control}, permutes the entire teacher-residual direction $e_y-p_f(x)$
across the labeled examples, so the displacement at $x_i$ is scored against the
complete target direction of another example; the main text abbreviates this as
permuting the observed labels. The two constructions differ: permuting labels alone
would leave the teacher component $\langle\delta_g^p(x_i),p_f(x_i)\rangle$ tied to its
input, and Table~\ref{tab:control} shows that this component carries selection signal
of its own, so a labels-only permutation is not a null for the full directional
statistic. The residual-direction permutation is: it destroys the correspondence
between every input and its own target direction while preserving the marginal
distribution and scale of the statistic, and its behaviour in
Table~\ref{tab:control}, tracking pure distortion, is what a proper null predicts.
The question a labels-only permutation would address, namely what the observed-label
part contributes relative to the teacher part, is answered directly by the
$A_{\mathcal S}^{\mathrm{lab}}$/$A_{\mathcal S}^{\mathrm{teach}}$ decomposition
reported in the same table. All permutation results in this paper use the
residual-direction construction.
\begin{table}[!tb]
\centering\scriptsize\setlength{\tabcolsep}{3pt}
\caption{The permutation control and the two components of the directional term, in median
target cross-entropy regret. The control receives the identical score structure,
coefficient grid and cross-validation as alignment, and differs only in that the
teacher-residual directions $e_y-p_f(x)$ are permuted across the labeled examples
before the statistic is formed. Boldface marks the lowest of the three in each row.
CNN rows pool all three executions, $45$ runs per architecture.}
\label{tab:control}
\begin{tabular}{@{}llcccc@{}}
\toprule
Setting & $n$ & Dist.\ & Align & Perm. & Teach. \\
\midrule
ResNet50--DomainNet & 10 & 0.2323 & 0.1870 & 0.2170 & \textbf{0.1445} \\
 & 25 & 0.2323 & 0.1143 & 0.2228 & \textbf{0.1101} \\
 & 50 & 0.2323 & \textbf{0.0635} & 0.2276 & 0.0692 \\
 & 100 & 0.2323 & \textbf{0.0342} & 0.2304 & 0.0432 \\
 & 150 & 0.2323 & \textbf{0.0270} & 0.2320 & 0.0360 \\
 & 300 & 0.2323 & \textbf{0.0024} & 0.2307 & 0.0260 \\
\midrule
MobileNetV2--DomainNet & 10 & 0.0778 & 0.0816 & \textbf{0.0584} & 0.2169 \\
 & 25 & 0.0778 & \textbf{0.0399} & 0.0703 & 0.1458 \\
 & 50 & 0.0778 & \textbf{0.0241} & 0.0685 & 0.0754 \\
 & 100 & 0.0778 & \textbf{0.0049} & 0.0653 & 0.0568 \\
 & 150 & 0.0778 & \textbf{0.0026} & 0.0662 & 0.0314 \\
 & 300 & 0.0778 & \textbf{0.0000} & 0.0653 & 0.0021 \\
\midrule
ViT--DomainNet & 10 & 0.0200 & 0.0501 & \textbf{0.0231} & 0.1957 \\
 & 25 & 0.0200 & 0.0434 & \textbf{0.0213} & 0.0796 \\
 & 50 & 0.0200 & 0.0295 & \textbf{0.0175} & 0.0387 \\
 & 100 & 0.0200 & 0.0183 & 0.0181 & \textbf{0.0176} \\
 & 150 & 0.0200 & 0.0110 & 0.0173 & \textbf{0.0091} \\
\midrule
ViT--CIFAR-20 & 10 & 0.1009 & 0.1325 & \textbf{0.0912} & 0.2162 \\
 & 25 & 0.1009 & \textbf{0.1010} & 0.1056 & 0.1270 \\
 & 50 & 0.1009 & 0.0764 & 0.0966 & \textbf{0.0745} \\
 & 100 & 0.1009 & 0.0715 & 0.0988 & \textbf{0.0538} \\
 & 150 & 0.1009 & 0.0531 & 0.1002 & \textbf{0.0457} \\
 & 200 & 0.1009 & 0.0458 & 0.1006 & \textbf{0.0398} \\
\bottomrule
\end{tabular}
\end{table}

\paragraph{Deployment cost.}
Supplement Table~\ref{tab:cost} gives the price of each memory budget and the spread that
survives inside it. The two columns answer different questions: the excess over the
unconstrained best is what the constraint costs, while the internal spread is what
selection can still recover once the constraint is imposed, and only the first varies
much across settings. Supplement Table~\ref{tab:costsel} then restricts the candidate
set, the candidate search, the coefficient search and the oracle to the admissible
family; the labeled sample is unchanged, so its regrets are measured against the
best admissible candidate rather than the best overall.
\begin{table}[!tb]
\centering
\scriptsize
\setlength{\tabcolsep}{3pt}
\caption{Selection under a deployment-cost constraint. Cost is the weight memory of
a candidate relative to full precision, with layers kept in full precision counted
at their true width. For each budget we report how many of the $72$ candidates are
admissible, its excess over the unconstrained best, and the mean
spread between the worst and best admissible candidate. Means over $45$ runs per architecture.}
\label{tab:cost}
\begin{tabular}{@{}llccc@{}}
\toprule
Setting & Budget & Admissible & Excess over free & Internal spread \\
\midrule
RN50--DN & $\le 10\%$ & 12 & 0.710 & 14.41 \\
         & $\le 16\%$ & 36 & 0.103 & 15.02 \\
         & $\le 22\%$ & 54 & 0.007 & 15.12 \\
         & none       & 72 & 0.000 & 15.13 \\
\midrule
MNV2--DN & $\le 10\%$ & 12 & 0.901 & 10.41 \\
            & $\le 16\%$ & 36 & 0.869 & 10.63 \\
            & $\le 22\%$ & 54 & 0.868 & 10.63 \\
            & none       & 72 & 0.000 & 11.50 \\
\bottomrule
\end{tabular}
\end{table}

\begin{table*}[!tb]
\centering
\scriptsize
\setlength{\tabcolsep}{4.5pt}
\caption{Constrained selection. The candidate set, the coefficient search and the oracle are
all restricted to $\mathcal G_B=\{g:C(g)\le B\}$; the labeled sample is unchanged, so regret is
measured against the best admissible candidate. Pooled over all three executions, $45$ runs per
architecture, lower of the two supervised selectors in boldface.}
\label{tab:costsel}
\begin{tabular}{@{}ll ccc ccc ccc@{}}
\toprule
& & \multicolumn{3}{c}{$n{=}10$} & \multicolumn{3}{c}{$n{=}50$} & \multicolumn{3}{c}{$n{=}300$} \\
\cmidrule(lr){3-5}\cmidrule(lr){6-8}\cmidrule(lr){9-11}
Setting & Budget & Dist.\ & Val-CE & CE-combo & Dist.\ & Val-CE & CE-combo & Dist.\ & Val-CE & CE-combo \\
\midrule
ResNet50--DomainNet & $\le 10\%$ & 0.095 & 0.097 & \textbf{0.090} & 0.079 & 0.043 & \textbf{0.042} & 0.075 & \textbf{0.002} & 0.004 \\
 & $\le 16\%$ & 0.305 & 0.168 & \textbf{0.160} & 0.304 & \textbf{0.040} & 0.056 & 0.304 & \textbf{0.014} & 0.016 \\
 & $\le 22\%$ & 0.269 & 0.184 & \textbf{0.176} & 0.283 & \textbf{0.063} & 0.078 & 0.290 & 0.015 & \textbf{0.009} \\
 & none & 0.220 & 0.182 & \textbf{0.164} & 0.220 & \textbf{0.062} & 0.070 & 0.220 & 0.013 & \textbf{0.007} \\
\midrule
MobileNetV2--DomainNet & $\le 10\%$ & 0.011 & \textbf{0.003} & 0.005 & 0.000 & \textbf{0.000} & \textbf{0.000} & 0.000 & \textbf{0.000} & \textbf{0.000} \\
 & $\le 16\%$ & 0.318 & 0.229 & \textbf{0.193} & 0.167 & 0.012 & \textbf{0.011} & 0.163 & \textbf{0.000} & \textbf{0.000} \\
 & $\le 22\%$ & 0.437 & 0.310 & \textbf{0.308} & 0.348 & \textbf{0.056} & 0.096 & 0.331 & \textbf{0.003} & 0.007 \\
 & none & 0.076 & 0.227 & \textbf{0.102} & 0.086 & 0.035 & \textbf{0.022} & 0.087 & \textbf{0.001} & 0.005 \\
\bottomrule
\end{tabular}
\end{table*}

\section{Degradation Regimes and the Failure of Confidence-Based Estimators}
\label{supp:regimes}

This section of the supplement supports two claims made in Section~4 of the main paper. The
confidences quoted there for ResNet50, $1.000$ for the worst candidate against
$0.408$ for the best, are the medians reported in
Table~\ref{tab:supp_regimes}; and the statement that the two ViT settings
invert the pattern is quantified here over all executions of the four settings
rather than the single run shown in Figure~1.

Section~\ref{subsec:labelfree} attributes the failure of the confidence-based
estimators to the degradation regime rather than to the estimators themselves.
This section gives the evidence for that attribution, quantifies how far
threshold calibration can compensate, and states why we stop short of a
mechanism. All quantities are computed from the stored predictive distributions; no
retraining or requantization is involved.

\paragraph{The two regimes, quantified.}
Table~\ref{tab:supp_regimes} reports for each setting, over all executions, the
median regret of the four confidence-based estimators alongside the diagnostics
that account for it. The inversion described in the body is visible in three
quantities at once. Accuracy and mean confidence across the $72$ candidates are
uncorrelated or inversely related on the CNNs and positively related on the ViTs;
the worst candidate is more confident than the best on the CNNs and less confident
on the ViTs; and the confidence-based estimators select a chance-accuracy
candidate in most runs of each CNN architecture and in none on the ViTs. The rate
is not uniform within the group: on MobileNetV2 the two ATC variants do so in $39$
and $35$ of $45$ runs while average confidence and entropy do so in all $45$.

\begin{table*}[!tb]
\centering
\scriptsize
\setlength{\tabcolsep}{3pt}
\caption{Degradation regimes. Median target cross-entropy regret of the four
confidence-based estimators, with diagnostics. $\rho$ is the median Pearson
correlation between test accuracy and mean confidence across the $72$
candidates within a run. $c_{\text{worst}}$ and $c_{\text{best}}$ are the median
confidences of the worst and best candidate by test cross-entropy. ``Distinct''
is the median fraction of distinct confidence values among collapsed candidates
(accuracy $\le 1.5\times$ chance). CNN rows pool three end-to-end executions.}
\label{tab:supp_regimes}
\begin{tabular}{lrrrrrrrrrr}
\toprule
Setting & Runs & $K$ & AvgConf & Entropy & ATC-MC & ATC-NE & $\rho$ & $c_{\text{worst}}$ & $c_{\text{best}}$ & Distinct \\
\midrule
ResNet50--DomainNet &45 &30 &15.062 &15.062 & 15.062  & 15.062  &$-0.38$ &1.000 &0.408 &34\% \\
MobileNetV2--DomainNet &45 &30 &10.814 &10.814 & 4.886  & 4.696  &$-0.09$ &0.985 &0.534 &100\% \\
ViT--CIFAR-20         & 27 & 20 &  0.124 &  0.167 &  0.100 &  0.099 & $+0.86$ & 0.579 & 0.726 & 100\% \\
ViT--DomainNet        & 17 & 50 &  0.062 &  0.339 &  0.020 &  0.020 & $+0.88$ & 0.523 & 0.761 & 100\% \\
\bottomrule
\end{tabular}
\end{table*}

\begin{table}[!tb]
\centering\scriptsize\setlength{\tabcolsep}{3pt}
\caption{Collapse rate, the fraction of candidates with accuracy at most
$1.5\times$ chance, by bit width and scaling granularity. PT is per-tensor and PC
per-channel. Percentages are over candidates and runs within each setting.}
\label{tab:supp_collapse}
\begin{tabular}{@{}lcccccccc@{}}
\toprule
& \multicolumn{2}{c}{RN50} & \multicolumn{2}{c}{MNV2}
& \multicolumn{2}{c}{ViT--C20} & \multicolumn{2}{c}{ViT--DN} \\
\cmidrule(lr){2-3}\cmidrule(lr){4-5}\cmidrule(lr){6-7}\cmidrule(lr){8-9}
Bits & PT & PC & PT & PC & PT & PC & PT & PC \\
\midrule
2 & 99 & 100 & 100 & 100 & 86 & 96 & 91 & 88 \\
3 & 36 & 16 & 99 & 100 & 33 & 0 & 31 & 0 \\
4 & 33 & 0 & 91 & 90 & 15 & 0 & 17 & 0 \\
5 & 0 & 0 & 87 & 39 & 0 & 0 & 0 & 0 \\
6 & 0 & 0 & 0 & 0 & 0 & 0 & 0 & 0 \\
8 & 0 & 0 & 0 & 0 & 0 & 0 & 0 & 0 \\
\bottomrule
\end{tabular}
\end{table}

\paragraph{Threshold calibration mitigates without repairing.}
The two CNN settings differ in how far calibration helps, and the difference
tracks confidence saturation. On ResNet50 the collapsed candidates take only
$34\%$ distinct confidence values and reach exactly $1.000$; the ATC threshold
therefore falls inside a mass of ties, its accuracy estimate loses
discriminative power, and the median regret of all four estimators is
approximately the median family spread. On MobileNetV2 confidences remain distinct,
calibration roughly halves the median regret ($10.81 \rightarrow 4.89$), and no
estimator selects the single worst candidate. ATC-MC nevertheless selects a
chance-accuracy candidate in $39$ of $45$ runs and ATC-NE in $35$ of $45$.
Threshold calibration addresses the symptom, not the cause.

\paragraph{Where collapse lives in the candidate grid.}
Table~\ref{tab:supp_collapse} breaks the collapse rate down by bit width and
scaling granularity. One regularity is common to all four settings: collapse concentrates at two and three bits. Granularity matters only at intermediate
widths and not uniformly: per-tensor collapses more often than per-channel at three and
four bits on ResNet50 and at five bits on MobileNetV2, while at two bits the two are
comparable and on ViT--CIFAR-20 the order reverses. MobileNetV2 is the exception in degree
rather than in kind: it still collapses at four and five bits, so a
substantial part of its nominal $72$-candidate family consists of degenerate
predictors. This does not affect reported regrets, which are measured against
the best candidate actually attained, but it qualifies what ``$72$ candidates''
means in that setting.

\paragraph{We do not claim a mechanism.}
The natural explanation for the split in Table~\ref{tab:supp_regimes} is
normalization: the CNNs carry frozen BatchNorm statistics that become stale when
weights are quantized, whereas LayerNorm normalizes at inference and cannot
inherit stale statistics. Our design cannot support that claim, for two reasons.
First, architecture and teacher provenance are confounded: every BatchNorm model
here is obtained by transfer learning against hard labels, and every LayerNorm
model comes from a temperature-based distillation pipeline (distillation
temperatures $2.0$ and $10.0$) whose outputs are consistent with the different
soft-target training regime, although the objective imposes no hard confidence
bound. The maximum confidence observed among collapsed
two-bit candidates is $0.8008$ on ViT--CIFAR-20 and $0.7998$ on ViT--DomainNet,
against $1.0000$ on ResNet50 and $0.9998$ on MobileNetV2, a ceiling
consistent with soft-target training rather than with LayerNorm. Second, the
BatchNorm account does not separate the two CNNs: ResNet50 shows a confidence
gap of $+0.704$ between collapsed and healthy candidates, MobileNetV2 only
$+0.069$, although both carry BatchNorm, and MobileNetV2 collapses more often
while saturating less. Separating the two would require a model that
breaks the pairing, a LayerNorm backbone trained against hard labels or a
convolutional one distilled with soft targets, which is a different construction
from the candidate families this paper selects over. We therefore report the
phenomenon and its consequence, and leave the mechanism open.

\paragraph{Consequence for practice.}
The asymmetry matters more than the average. Confidence-based selection is
adequate in one regime and catastrophic in the other, and deciding which regime
holds requires the target labels the setting denies: the diagnostics in
Table~\ref{tab:supp_regimes} that separate the regimes (accuracy of the
worst candidate, the accuracy--confidence correlation) are themselves
label-dependent. Teacher distortion is unaffected in either regime, because a
collapsed candidate is far from the teacher's output distribution however
confident it is. This is the sense in which anchoring to the teacher is a
safer default rather than merely a better average performer.

\section{Cohort Inclusion Criteria}
\label{supp:artifact}
A run enters the final cohort only if it yields a complete $72$-candidate family
with usable stored predictions. The criteria are structural and were applied to
the runs available during the audit, before any comparative selector result
from those pipelines could enter the tables. A run is excluded if its candidate family is
incomplete, if its per-run outputs are uniform across candidates, or if every
candidate sits at chance accuracy.

One run falls outside it. Of the $18$ ViT--DomainNet runs generated, one
contains $47$ of the $72$ candidates; the missing
indices $47$ to $71$ form a contiguous block at the end of the enumeration
order, so the exclusion cannot depend on measured performance. It does not enter
any table.

The cohort reported throughout is therefore $45$ runs per CNN architecture and
$17$ on ViT--DomainNet and $27$ on ViT--CIFAR-20, as
Table~\ref{tab:supp_runs} states: the
full design of Section~4, and the evidence behind every claim we report.

\def\isChecklistMainFile{}
\clearpage
\makeatletter
\@ifundefined{isChecklistMainFile}{
  \newif\ifreproStandalone
  \reproStandalonetrue
}{
  \newif\ifreproStandalone
  \reproStandalonefalse
}
\makeatother

\ifreproStandalone
\documentclass[letterpaper]{article}
\usepackage[submission]{aaai2027}
\setlength{\pdfpagewidth}{8.5in}
\setlength{\pdfpageheight}{11in}
\frenchspacing

\begin{document}
\fi
\setlength{\leftmargini}{20pt}
\makeatletter\def\@listi{\leftmargin\leftmargini \topsep .5em \parsep .5em \itemsep .5em}
\def\@listii{\leftmargin\leftmarginii \labelwidth\leftmarginii \advance\labelwidth-\labelsep \topsep .4em \parsep .4em \itemsep .4em}
\def\@listiii{\leftmargin\leftmarginiii \labelwidth\leftmarginiii \advance\labelwidth-\labelsep \topsep .4em \parsep .4em \itemsep .4em}\makeatother

\setcounter{secnumdepth}{0}
\renewcommand\thesubsection{\arabic{subsection}}
\renewcommand\labelenumi{\thesubsection.\arabic{enumi}}

\newcounter{checksubsection}
\newcounter{checkitem}[checksubsection]

\newcommand{\checksubsection}[1]{%
  \refstepcounter{checksubsection}%
  \paragraph{\arabic{checksubsection}. #1}%
  \setcounter{checkitem}{0}%
}

\newcommand{\checkitem}{%
  \refstepcounter{checkitem}%
  \item[\arabic{checksubsection}.\arabic{checkitem}.]%
}
\newcommand{\question}[2]{\normalcolor\checkitem #1 #2 \color{blue}}
\newcommand{\ifyespoints}[1]{\makebox[0pt][l]{\hspace{-15pt}\normalcolor #1}}

\ifreproStandalone
\end{document}
\fi

\end{document}